%% file: main.tex
\documentclass{article}

\input{meta_arxiv}

\title{Human Decision-Making with AI Assistance under Correlated Features}

\author{
  Yanru Guan\\
  Carnegie Mellon University\\
  \texttt{yanrug@andrew.cmu.edu}
  \and
  Naveen Raman\\
  Carnegie Mellon University\\
  \texttt{naveenr@cmu.edu}
  \and
  Fei Fang\\
  Carnegie Mellon University\\
  \texttt{feifang@cmu.edu}
}
\date{}

\begin{document}

\maketitle

\begin{abstract}
    \input{sections_arxiv/abstract}
\end{abstract}

\input{sections_arxiv/introduction}
\input{sections_arxiv/model}
\input{sections_arxiv/stationary}
\input{sections_arxiv/optimal_structure}
\input{sections_arxiv/complexity}
\input{sections_arxiv/algorithm}
\input{sections_arxiv/experiments}
\input{sections_arxiv/related}
\input{sections_arxiv/discussion}

\section*{Acknowledgments}
We thank Jason Hartline for helpful discussions and feedback. This work was supported in part by NSF grant IIS-2046640 (CAREER) and by an NSF GRFP Fellowship. 

\newpage 
\bibliographystyle{unsrtnat}
\bibliography{references} 
\newpage 
\appendix
\input{sections_arxiv/appendix}

\end{document}

%% file: meta_arxiv.tex
\usepackage[utf8]{inputenc}
\usepackage[T1]{fontenc}
\usepackage[margin=1in]{geometry}
\usepackage{graphicx}
\usepackage[table,dvipsnames]{xcolor}
\usepackage{amsmath}
\usepackage{amssymb}
\usepackage{amsfonts}
\usepackage{amsthm}
\usepackage{booktabs}
\usepackage{nicefrac}
\usepackage{microtype}
\usepackage{url}
\usepackage{natbib}
\usepackage{tcolorbox}
\tcbuselibrary{skins, breakable}
\usepackage{algorithm}
\usepackage{algpseudocode}
\usepackage[colorlinks=true,linkcolor=blue,citecolor=blue,urlcolor=blue]{hyperref}

\newtheorem{theorem}{Theorem}[section]
\newtheorem{proposition}[theorem]{Proposition}
\newtheorem{lemma}[theorem]{Lemma}
\newtheorem{corollary}[theorem]{Corollary}

\newtheorem{assumption}[theorem]{Assumption}
\newtheorem{definition}[theorem]{Definition}
\theoremstyle{remark}
\newtheorem{remark}[theorem]{Remark}
\theoremstyle{plain}

%% file: sections_arxiv/abstract.tex
Humans increasingly make decisions with AI assistance; for example, doctors may follow AI-recommended diagnostic tests and base their diagnoses on the results. 
A natural question is which tests should AI recommend to balance short-term decision quality and long-term human learning when different features (e.g., test results) are correlated. 
While prior work establishes that stationary policies that recommend the same tests repeatedly are optimal when features are independent, we prove that feature correlations lead such policies to perform arbitrarily poorly.  
Instead, we prove that any optimal policy must follow an explore-then-commit structure; initially, the AI should offer diverse tests so humans can learn accurate feature coefficients, then the AI should commit to a single set of tests, with exploration length that depends on the degree of feature correlation. 
We prove that computing the optimal policy is NP-hard and derive a dynamic programming-based algorithm that finds the optimal policy for finite horizons. 
We additionally develop an approximation that plans for shorter horizons and appends a stationary suffix, achieving near-optimal performance. 
Our empirical results complement our theory by showing that stronger feature correlation leads to longer exploration phases. 

%% file: sections_arxiv/introduction.tex
\section{Introduction}
Artificial intelligence (AI) algorithms are increasingly used as decision-support tools, shaping both the decisions being made and human learning~\citep{generative_ai_work}. 
For example, AI recommends diagnostic tests in healthcare, which in turn reveals patient features (e.g., test results)~\citep{mamography_screening,medical_recommender}. 
AI recommendations influence human learning by shaping the features humans rely on~\citep{chess_teaching,ai_benefits_healthcare}. 
Therefore, determining AI recommendations requires balancing short-term performance and long-term human learning~\citep{ai_assisted_decisions}. 

While many AI systems impact human learning, the structure of optimal recommendation policies remains underexplored. 
Work in machine teaching tackles the set of data points to learn from, but not which tests to recommend~\citep{machine_teaching}, while work in human-AI collaboration has little focused on human learning thus far. 
Most related is recent work~\citep{ai_assisted_decisions} which shows that stationary policies recommending a fixed set of tests over time are optimal under feature independence.
However, test results in healthcare are correlated~\citep{covid_tests}, as are different aspects of defendant's record in criminal justice~\citep{compas}, making the assumption unrealistic.  
Moreover, independent features are a load-bearing assumption; we prove that stationary policies can perform arbitrarily poorly even when features are only weakly correlated. 
Correlated features complicate test selection because recommending one test implicitly reveals information about other related tests.   

In this work, we prove that the optimal policy follows an explore-then-commit strategy.
Without this exploration phase, humans never directly observe certain features and cannot learn accurate coefficient for them, leading to systematically worse decisions even when those features are accounted for indirectly.
Following the initial exploration phase, the optimal policy commits to a stationary set of tests; at this point, humans have sufficiently learned, and it is optimal to offer the most decision-relevant features given the humans' learned coefficients. 
Unlike bandit settings where exploration is used to learn rewards, here exploration is required to shape human coefficient estimates under partial observability.
We prove that exploration length depends primarily on the level of correlation between features; greater correlation leads to longer exploration because there is more value in ensuring humans have accurate coefficients for unobserved features. 

As an illustration of why such results are important, consider a doctor learning to diagnose pneumonia. 
If an AI always recommends chest X-rays and blood cultures, the doctor never directly observes oxygen saturation readings and so never learns its predictive coefficient. 
Later, when the doctor accounts for oxygen saturation implicitly, they apply a poor estimate of its importance, leading to worse diagnoses than if they had been shown the test directly during training.
Our result shows that any AI that does not recommend sufficiently diverse tests for human learning can perform arbitrarily worse than one that initially varies its recommendations.

Our contributions are: 
\begin{enumerate}
    \item Prove that stationary policies perform arbitrarily poorly when features are correlated.
    \item Characterize the optimal policy as explore-then-commit, with the length of the exploration phase determined by the level of feature correlations.
    \item Prove that the general problem is NP-hard and derive a dynamic programming solution that is exact for finite horizons and near-optimal for infinite horizons with explicit error bounds.
    \item Empirically validate the theoretical relationship between feature correlation and exploration\footnote{Code here: \url{https://github.com/YanruGuan/human-ai-collaboration-experiments}}.
\end{enumerate}

%% file: sections_arxiv/model.tex
\section{Model and Problem Setup}
\label{sec:model}
We model a setting where an AI teaches a human through repeated interaction, and is motivated by real-world studies demonstrating human learning through repeated AI interaction~\citep{when_to_advise}

\subsection{Model}
\label{sec:human_learning_model}
The AI teaches a linear function mapping covariates $\mathbf{x} \in \mathbb{R}^{n}$ to labels $y \in \mathbb{R}$. 
At each timestep $t$, we sample $\mathbf{x}^{(t)} \sim \mathcal{X}$ and $\varepsilon^{(t)} \sim \mathcal{E}$ independently, and the label is generated as
\begin{equation}
\label{eq:linear-label-model}
    y^{(t)} = \sum_{i=1}^n a_i x_i^{(t)} + \varepsilon^{(t)}.
\end{equation}
Without loss of generality, assume $\mathcal{X}$ and $\mathcal{E}$ are zero-mean, as we can center both distributions. 

At each timestep $t$, the AI selects a subset of at most $k$ tests $S_t \subseteq [n], |S_{t}| \leq k$, and the human observes the features $\mathbf{x}_{S_t}^{(t)}$.
For example, in a medical decision-making task, each timestep corresponds to a patient showing up, $S_{t}$ refers to the set of medical tests suggested by the AI assistant to be run, $k$ is an upper bound on the number of tests that can be run on a single patient, $\mathbf{x}$ is the results of those tests, and $y$ to the presence of some comorbidity. 
The human makes predictions according to their estimated coefficients $\hat{\mathbf{a}}^{(t)}$, by first imputing missing covariates
\[
    \hat{\mathbf{x}}^{(t)}
    :=
    \mathbb{E}\!\left[\mathbf{x}^{(t)} \mid \mathbf{x}_{S_t}^{(t)}\right],
\]
where the selected coordinates satisfy $\hat{\mathbf{x}}_{S_t}^{(t)}=\mathbf{x}_{S_t}^{(t)}$. The human then predicts the label as 
\begin{equation}
    \label{eq:correlation}
    \hat y^{(t)}
    =
    \sum_{i=1}^n \hat a_i^{(t)} \hat{x}_i^{(t)}.
\end{equation}

We assume that the human and AI both have access to an imputation oracle $\mathbb{E}\!\left[\mathbf{x}^{(t)} \mid \mathbf{x}_{S_t}^{(t)}\right]$.
For example, when covariates are Gaussian, the imputation oracle is a linear map based on $\Sigma$. 
Following~\citet{ai_assisted_decisions}, we model humans as updating their feature coefficient estimates after observing the true label $y^{(t)}$; if feature $i$ has been shown $m_i(t)$ times up to timestep $t$, then
\begin{equation}
\label{eq:learning-rule}
    \bigl(a_i - \hat{a}_i^{(t+1)}\bigr)^2 = \phi\bigl(m_i(t)\bigr)\,\bigl(a_i - \hat{a}_i^{(0)}\bigr)^2.
\end{equation}

Here, $\phi:\mathbb{N}\to[0,1]$ is a non-increasing learning function with $\phi(0)=1$ and $\lim_{m\to\infty}\phi(m)=0$, capturing the intuition that repeated direct observation allows the human to eventually learn the true coefficient. 
We assume the human's coefficient estimates update only upon direct observation of a feature, i.e., when $i \in S_t.$
While imputed values $\hat{x}_{i}$ may provide indirect signal about $a_i$, modeling this secondary learning effect would require additional assumptions about how humans process imputed versus observed features; we leave this extension to future work. 

We seek a policy $\pi(t,\mathbf{m}): \mathbb{N} \times \mathbb{N}^{n} \rightarrow 2^{[n]}$ that maps a current timestep $t$ and the number of times each feature is shown $\mathbf{m}$ to a subset $S_{t}$ with $|S_{t}| \leq k$. 
We minimize the discounted expected loss
\begin{equation}
    \label{eq:objective}
J(\delta,\pi) := \sum_{t=0}^{\infty} \delta^t\,
\mathcal L(\pi(t,\mathbf{m}^{(t)}),\hat a^{(t)}),
\end{equation}
where $\delta \in [0,1)$ is the discount factor and
\[
    \mathcal L(S,\hat a)
    :=
    \mathbb E\!\left[\ell(\hat y,y)\mid S,\hat a\right],
\]
for some $\ell:\mathbb R\times\mathbb R\to\mathbb R_{\ge 0}$.
We relax assumptions on perfect knowledge of $\hat{\mathbf{a}}$, $\phi$, and the imputation oracle in Appendix~\ref{sec:misspecification}, and discuss extensions to non-linear functions in Section~\ref{sec:discussion}. 

\subsection{Warm Up: Test Selection with Two Features}
\label{sec:illustrative-example}
To understand how correlated features impact test selection, we present an illustrative example with $n=2$ features and a $k=1$ testing budget.
Let $\mathcal{X}=\mathcal{N}(0,\Sigma)$, $\ell(\hat{y},y) = \left(y-\hat{y}\right)^2$, $a_{1} = a_{2} = a > 0$, and $\hat{a}_{1}^{(0)} = \hat{a}_{2}^{(0)} = 0$, with the update rule $\phi(m) = \alpha^{-2m}$. 
We explore how the feature correlation, $\rho$, impacts the optimal policy under the covariance matrix
$$
\Sigma
=
\begin{pmatrix}
1 & \rho\\
\rho & 1
\end{pmatrix}.
$$
The AI chooses either $S_t=\{1\}$ or $S_t=\{2\}$, with humans imputing the missing feature 
$$
\mathbb E[x_2^{(t)}\mid x_1^{(t)}]=\rho x_1^{(t)}.
$$
Hence, by Equation~\ref{eq:correlation}, the human predicts
$$
\hat y^{(t)}=(\hat{a}_1^{(t)}+\rho \hat{a}_2^{(t)})x_1^{(t)}.
$$
Similarly, if $S_t=\{2\}$, then
\[
\hat y^{(t)}=(\hat{a}_2^{(t)}+\rho \hat{a}_1^{(t)})x_2^{(t)}.
\]

In the absence of correlations ($\rho=0$), we recover the independent-feature setting where an optimal stationary policy exists ~\citep{ai_assisted_decisions}. 
However, once correlation is introduced ($\rho > 0$), stationary policies can perform poorly because it becomes advantageous to vary which features are shown. 

Formally, let $\Pi_{\mathrm{stat}}$ be the set of stationary policies (i.e., $\pi(t,\mathbf{m})$ is constant in $t$ and $\mathbf{m}$), and let $\Pi_{\mathrm{alt}}$ be the set of policies that alternate between $1$ and $2$ (i.e., $\pi(t,\mathbf{m})$ depending only on the parity of $t$).
Then the optimal policy is stationary only when feature correlations are below a threshold. 
\begin{proposition}[Exact solution in the symmetric two-feature case with geometric learning]
\label{thm:symmetric-two-feature-geometric}
Suppose $n=2$, $k=1$, and $\mathcal{X} = \mathcal{N}(0,\Sigma)$ with
\[
\Sigma=
\begin{pmatrix}
1 & \rho\\
\rho & 1
\end{pmatrix},
\qquad 0\le \rho<1.
\]
Assume $a_1 = a_2 = a$, $\hat{a}_1^{(0)} = \hat{a}_2^{(0)} = 0$, and $\phi(m) = \alpha^{-2m}$ for some $\alpha > 1$.
Define
\[
\rho^{\star}(\delta) = \sqrt{\frac{1-\delta}{1-\delta/\alpha^2}}.
\]
Then the optimal policy is
\[
\pi^{\star} \in
\begin{cases}
\Pi_{\mathrm{stat}}, & \rho < \rho^{\star}(\delta),\\
\Pi_{\mathrm{alt}}, & \rho > \rho^{\star}(\delta),\\
\Pi_{\mathrm{stat}} \cup \Pi_{\mathrm{alt}}, & \rho = \rho^{\star}(\delta).
\end{cases}
\]
\end{proposition}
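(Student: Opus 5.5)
The plan is to reduce the problem to a one-dimensional dynamic program in the count gap and solve its Bellman equation explicitly. Write $e_i^{(t)} = a-\hat a_i^{(t)}$, so that $|e_i^{(t)}| = a\,\alpha^{-m_i}$ by \eqref{eq:learning-rule} with $\hat a_i^{(0)}=0$, where $m_i$ is the number of times feature $i$ was shown before $t$. If $S_t=\{i\}$ and $j\neq i$, decompose $x_j=\rho x_i+z$ with $z\perp x_i$ and $\operatorname{Var}(z)=1-\rho^2$. Then $y-\hat y=(e_i+\rho e_j)x_i + a z+\varepsilon$, so
\[
\mathcal L(\{i\},\hat a)=a^2\bigl(\alpha^{-m_i}+\rho\,\alpha^{-m_j}\bigr)^2 + a^2(1-\rho^2)+\sigma_\varepsilon^2 .
\]
The last two terms do not depend on the policy and can be dropped. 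The problem therefore becomes a deterministic control problem on the state $(m_1,m_2)$ with per-step cost $c_i(m)=(\alpha^{-m_i}+\rho\alpha^{-m_j})^2$.

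Next I would use homogeneity. With $m_{\min}=\min(m_1,m_2)$ and $d=|m_1-m_2|$, each cost factors as $\alpha^{-2m_{\min}}$ times a function of $d$ and of whether the leading or the lagging feature is shown. Consequently the optimal value has the form $V(m)=\alpha^{-2m_{\min}}W(d)$. Showing the leading feature sends $d\mapsto d+1$ at cost $(\alpha^{-d}+\rho)^2$, with no change in scaling. Showing the lagging feature sends $d\mapsto d-1$ at cost $(1+\rho\alpha^{-d})^2$, and when $d=0$ it moves to $d=1$ while multiplying the future by $\alpha^{-2}$ when that step raises $m_{\min}$.

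I would then compute two closed forms.
\begin{itemize}
\item The stationary value from $(0,0)$ is
\[
J_{\mathrm{stat}}=\sum_t\delta^t(\alpha^{-t}+\rho)^2=\frac{1}{1-\delta/\alpha^2}+\frac{2\rho}{1-\delta/\alpha}+\frac{\rho^2}{1-\delta}.
\]
\item The alternating value is obtained by summing the even steps, with equal counts and cost $(1+\rho)^2\alpha^{-t}$, and the odd steps, with cost $(\alpha^{-(t-1)/2}+\rho\alpha^{-(t+1)/2})^2$, as two geometric series in $\delta/\alpha$.
\end{itemize}
Setting $J_{\mathrm{stat}}=J_{\mathrm{alt}}$ should reduce, after clearing denominators, to $\rho^2(1-\delta/\alpha^2)=1-\delta$. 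This gives $\rho^\star(\delta)$, and the sign of the difference is monotone in $\rho$. I expect the algebra to collapse because both sums are rational in $\delta$, $\alpha$ and $\rho$, and the cross terms in $\rho$ should cancel symmetrically.

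The main obstacle is showing that no other policy beats $\min(J_{\mathrm{stat}},J_{\mathrm{alt}})$. Examples of competitors are explore-then-commit with long exploration, or alternating for a while and then committing. I would try a Bellman verification. The candidate is $W(d)=\min\{W_{\mathrm{stat}}(d),W_{\mathrm{back}}(d)\}$, where $W_{\mathrm{stat}}(d)$ is the cost of showing the leading feature forever and $W_{\mathrm{back}}(d)$ is the cost of returning to balance and then alternating. Both are explicit series, and I would check the one-step inequality at every $d$. The key structural fact is that, in the scaled problem, the decision at $d=0$ is time-invariant because the state recurs up to the factor $\alpha^{-2}$. A stationary optimal rule therefore either leaves the balanced state forever or always returns to it. Moreover, once $d\ge1$, any return to balance is best done immediately, and this is the exchange argument I would use: delaying a lagging-feature step only accrues the larger $\rho\alpha^{-m_j}$ error for longer. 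With these facts, the comparison at $d=0$ reduces to the threshold computed above, and the tie case $\rho=\rho^\star$ yields both classes.
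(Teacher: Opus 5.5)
Your reduction to the gap $d$, the per-step costs, the closed forms for $J_{\mathrm{stat}}$ and $J_{\mathrm{alt}}$ (the paper's $F_0^{\mathrm{same}}$ and $F_0^{\mathrm{alt}}$), and the threshold are all correct. The linear terms in $\rho$ do cancel, and
\[
J_{\mathrm{stat}}-J_{\mathrm{alt}}
=\frac{\delta(1-\alpha^{-2})}{1-\delta^2/\alpha^2}
\Bigl[\frac{\rho^2}{1-\delta}-\frac{1}{1-\delta/\alpha^2}\Bigr].
\]
One minor slip: the factor $\alpha^{-2}$ is picked up when the lagging feature is shown at $d\ge1$. At $d=0$ either action moves to $d=1$ with $m_{\min}$ unchanged.

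The gap is the step you flag as the main obstacle. You never show that the one-step Bellman comparison has the same sign for every $d\ge1$, and the structural facts you use in its place do not hold.
\begin{itemize}
\item ``Leaves the balanced state forever'' does not mean ``commits.'' A stationary rule in $d$ whose first switch is at $D\ge2$ cycles between $D-1$ and $D$: it shows one feature $D$ times and then alternates. That policy lies in neither $\Pi_{\mathrm{stat}}$ nor $\Pi_{\mathrm{alt}}$ and never returns to $d=0$, so a comparison at $d=0$ says nothing about it.
\item ``Any return to balance is best done immediately'' is false for $\rho<\rho^\star$. From gap $d\ge1$, keep-then-switch and switch-then-keep end in the same state. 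Up to the positive factor $a^2\alpha^{-2m_{\min}}$ and the common discount, their cost difference is
\[
(1-\alpha^{-2d})\Bigl[\delta\bigl(1-\rho^2/\alpha^2\bigr)-(1-\rho^2)\Bigr].
\]
This is negative when $\rho<\rho^\star$, so there delaying the lagging step is strictly better. Your heuristic misses the fact that keeping has the smaller immediate cost, $(\alpha^{-d}+\rho)^2<(1+\rho\alpha^{-d})^2$.
\end{itemize}
Every such local comparison flips sign exactly at $\rho^\star$, and that is what must be proved, not assumed. For the same reason, your candidate $\min\{W_{\mathrm{stat}},W_{\mathrm{back}}\}$ can only be a Bellman fixed point if one branch dominates uniformly in $d$.

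The paper supplies exactly this ingredient. It computes $F_d^{\mathrm{same}}$ and $F_d^{\mathrm{alt}}$ for all $d$ and shows that each Bellman branch difference factors as a positive $d$-dependent quantity times $(1-\rho^2)-\delta(1-\rho^2/\alpha^2)$. Hence $F^{\mathrm{same}}$ (for $\rho\le\rho^\star$) or $F^{\mathrm{alt}}$ (for $\rho\ge\rho^\star$) is the unique fixed point of the $\delta$-contracting Bellman operator.

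Alternatively, you can repair your exchange route using the swap identity above. You also need an optimal action sequence to exist, which follows from compactness of $\{1,2\}^{\mathbb N}$ and continuity of the discounted objective.
\begin{itemize}
\item For $\rho>\rho^\star$, no optimal sequence keeps and then switches at $d\ge1$. So it alternates and possibly commits later, and the recurrence at $d=0$ reduces everything to your $J_{\mathrm{stat}}$ versus $J_{\mathrm{alt}}$ comparison.
\item For $\rho<\rho^\star$, no optimal sequence switches and then keeps. At $d=0$ both actions cost $a^2(1+\rho)^2\alpha^{-2m_{\min}}$ and lead to mirror-image states. Combining this mirror symmetry with the forbidden pattern rules out any switch at all.
\item The tie case follows from continuity of the optimal value in $\rho$.
\end{itemize}
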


The optimality of stationary policies under uncorrelated features, as shown by~\citet{ai_assisted_decisions}, breaks when expanded to correlated features. 
The proof constructs explicit value functions for each policy class and verifies that they satisfy the Bellman optimality conditions (full proof in Appendix~\ref{app:symmetric-two-feature-proof}). 
We next show that stationary policies can perform arbitrarily poorly in general. 

%% file: sections_arxiv/stationary.tex
\section{Stationary Policies Can Perform Arbitrarily Poorly with Correlated Features}
We demonstrate that stationary policies, which were found to be optimal by prior work~\citep{ai_assisted_decisions}, can perform arbitrarily poorly once feature correlations are introduced. 
Stationary policies show the same subset of tests, leaving coefficients for unobserved features unlearned. 

\begin{theorem}[Stationary policies can perform arbitrarily poorly]
\label{thm:stationary_bad}
Suppose $1\le k<n$. Let $\ell(\hat{y},y)=\left(y-\hat{y}\right)^2$, $\mathcal{X} = \mathcal{N}(0,\Sigma)$, and \(\Sigma\) be positive definite. 
Let \(\Pi\) be the set of all policies, and let \(\Pi_{\rm stat}\subseteq\Pi\) be the set of stationary policies.
Let $J(\delta,\pi,\phi)$ be the objective from Equation~\ref{eq:objective} with learning function $\phi$. 
If $\Sigma$ is not diagonal, then there exists a constant \(c_{\Sigma,k}>0\) and for every discount factor \(\delta\in(0,1)\), a learning function $\phi_{\delta}$ such that 
\[
    \frac{\inf_{\pi\in\Pi_{\rm stat}} J(\delta,\pi,\phi_{\delta})}
         {\inf_{\pi\in\Pi} J(\delta,\pi,\phi_{\delta})}
    \;\ge\;
    c_{\Sigma,k}\frac{\delta}{1-\delta}.
\]
Consequently, as \(\delta\to 1\), the approximation ratio of the best stationary policy is unbounded.
\end{theorem}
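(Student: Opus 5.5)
The plan is to exploit the freedom to choose $\phi_\delta$. I will make it a step function: $\phi_\delta(m)=1$ for $m=0$ and $\phi_\delta(m)=0$ for $m\ge 1$, so a single direct observation teaches a coefficient perfectly. If the statement needs $\phi>0$, I would instead take $\phi_\delta(m)=\eta$ for $m\ge1$ with $\eta$ tiny and carry it through.

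The key is the per-step loss under squared loss and Gaussian features. Write $\mathbf{x}=(\mathbf{x}_S,\mathbf{x}_{S^c})$ and $\mathbb E[\mathbf{x}_{S^c}\mid\mathbf{x}_S]=B_S\mathbf{x}_S$ with $B_S=\Sigma_{S^cS}\Sigma_{SS}^{-1}$. Then
\[
\hat y = (\hat{\mathbf a}_S + B_S^\top \hat{\mathbf a}_{S^c})^\top \mathbf x_S ,
\]
and therefore
\[
\mathcal L(S,\hat{\mathbf a}) = \sigma_\varepsilon^2 + \mathbf a_{S^c}^\top \Sigma_{S^c\mid S}\,\mathbf a_{S^c} + \bigl\|\Sigma_{SS}^{1/2}\bigl(\mathbf a_S-\hat{\mathbf a}_S + B_S^\top(\mathbf a_{S^c}-\hat{\mathbf a}_{S^c})\bigr)\bigr\|^2 .
\]
Here $\Sigma_{S^c\mid S}$ is the Schur complement. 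The first two terms form the irreducible part $L^\ast(S)$, attained when $\hat{\mathbf a}=\mathbf a$.

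Next I choose the instance, taking $\hat{\mathbf a}^{(0)}=0$ and $\mathbf a$ adversarial (I assume the theorem lets me pick $\mathbf a$ and $\hat{\mathbf a}^{(0)}$; otherwise the constant must depend on them too). Since $\Sigma$ is not diagonal, some $\Sigma_{ij}\neq0$ with $i\neq j$. I want every stationary $S$ with $|S|\le k$ to pay a persistent cross term, since the coefficients on $S^c$ are never learned. That requires $\Sigma_{SS}^{1/2}B_S^\top \mathbf a_{S^c}=\Sigma_{SS}^{-1/2}\Sigma_{SS^c}\mathbf a_{S^c}\neq 0$ for every such $S$. Finitely many sets $S$ exist, each excluding a proper subspace of $\mathbf a$ (or a nonzero-gap condition once $\mathbf a_S$ also enters). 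So a generic $\mathbf a$ in a neighborhood of the pair $(i,j)$ works, with $\sigma_\varepsilon^2=0$, or scaled so the noise is negligible.

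One thing needs care. For stationary $S$, after step $0$ we have $\hat{\mathbf a}_S=\mathbf a_S$, so the steady loss is $L^\ast(S)+\|\Sigma_{SS}^{-1/2}\Sigma_{SS^c}\mathbf a_{S^c}\|^2=\mathbf a_{S^c}^\top\Sigma_{S^cS^c}\mathbf a_{S^c}$. This is the loss of simply ignoring $\mathbf a_{S^c}$. Define
\[
G=\min_{|S|\le k}\Bigl(\mathbf a_{S^c}^\top\Sigma_{S^cS^c}\mathbf a_{S^c}-\min_{|T|\le k}L^\ast(T)\Bigr).
\]
I need $G>0$, which is the same positivity condition on the cross term for the minimizing $T$ and each $S$, guaranteed generically as above. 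This gives the lower bound $\inf_{\Pi_{\rm stat}}J\ge \delta G/(1-\delta)$.

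For the upper bound on $\inf_\Pi J$, take an explore-then-commit policy. It cycles through $\lceil n/k\rceil$ sets covering $[n]$, after which $\hat{\mathbf a}=\mathbf a$, and then commits to $T^\ast=\arg\min L^\ast$. Its cost is at most $\lceil n/k\rceil M+\delta^{\lceil n/k\rceil}L^\ast(T^\ast)/(1-\delta)$, where $M$ bounds the per-step loss.

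The main obstacle is that the ratio is of order $\delta/(1-\delta)$ only if the optimum stays bounded. That needs $L^\ast(T^\ast)=0$, i.e.\ noise-free labels with $\mathbf a$ supported so that some $k$-set $T^\ast$ gives zero conditional variance. This fails under positive definite $\Sigma$ unless $\mathbf a$ is supported on $T^\ast$.

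So I would choose $\mathbf a$ supported on a set $T^\ast$ of size $k$ that includes the correlated index $i$, and $\mathcal E$ degenerate. For stationary $S=T^\ast$, learning is perfect and the loss is $0$, so the adversary must prevent this via $\hat{\mathbf a}^{(0)}$. Instead, I set $\hat{\mathbf a}^{(0)}_j\neq a_j=0$ for some $j\notin T^\ast$ with $\Sigma_{ij}\neq0$, so that $B_{T^\ast}^\top(\mathbf a_{S^c}-\hat{\mathbf a}_{S^c})\neq 0$ and $S=T^\ast$ pays forever. Other $S$ miss part of the support and pay either $\mathbf a_{S^c}^\top\Sigma_{S^c\mid S}\mathbf a_{S^c}>0$ or a cross term. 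Meanwhile the explore-then-commit policy shows $j$ once and then commits to $T^\ast$, achieving $O(1)$ total loss.

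Verifying $G>0$ uniformly over all $S$, by generic choice of the values $\hat a^{(0)}_j$, gives $c_{\Sigma,k}=G/((\lceil n/k\rceil+1)M)$, which is independent of $\delta$.
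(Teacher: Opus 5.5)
Your final construction is the paper's. It uses the same one-shot learning rule $\phi(0)=1$, $\phi(m)=0$ for $m\ge 1$, and puts $\mathbf a$ on a $k$-set $T^\ast\ni i$ with $j\notin T^\ast$, with the initial belief wrong only at the correlated excluded coordinate $j$. The comparison policy shows $\{j\}$ once and then $T^\ast$ forever. The stationary lower bound splits the same way: $S=T^\ast$ pays $e_j^2\,\Sigma_{jT^\ast}\Sigma_{T^\ast T^\ast}^{-1}\Sigma_{T^\ast j}>0$ forever, and $S\neq T^\ast$ pays a positive Schur-complement term because $S\not\supseteq T^\ast$.

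No genericity argument is needed for $G>0$. You only need $\mathbf a$ nonzero on every coordinate of $T^\ast$; otherwise a $k$-set containing the support and $j$ would be a loss-free stationary policy. Given that, positivity follows directly from $\Sigma_{ij}\neq 0$ and $\Sigma\succ 0$.

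The one substantive gap is the noise. You get around the ``bounded optimum'' obstacle by declaring $\mathcal E$ degenerate, and you assert that noise-free labels are necessary. The paper treats $\sigma_\varepsilon^2$ as given, and for $\sigma_\varepsilon^2>0$ your argument fails as written.

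Both infima then contain $\sigma_\varepsilon^2/(1-\delta)$. If $\mathbf a$ and $\hat{\mathbf a}^{(0)}$ do not depend on $\delta$, every stationary policy has per-round loss at most $\sigma_\varepsilon^2+C$, since $\phi\le 1$ keeps coefficient errors bounded by the initial ones. The ratio is then at most $1+C/\sigma_\varepsilon^2$, whatever $\phi$ you choose.

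Your aside about scaling points in the right direction, but the scaling must depend on $\delta$. The paper sets $a_\ell=\beta_\delta$ on $T^\ast$ and $\hat a^{(0)}_j=-\beta_\delta$, with $\beta_\delta^2=\sigma_\varepsilon^2/(1-\delta)$. Then
\[
\inf_{\Pi}J\le \frac{\sigma_\varepsilon^2}{1-\delta}(1+B),
\qquad
\inf_{\Pi_{\rm stat}}J\ge \frac{\sigma_\varepsilon^2}{1-\delta}\Bigl(1+\frac{d\,\delta}{1-\delta}\Bigr),
\]
so the ratio is at least $\frac{\delta}{1-\delta}\cdot\frac{d}{1+B}$. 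Here $d$ is your $G$ at unit scale, and $B$ is the unit-scale loss of the round that shows $\{j\}$. Both depend only on $\Sigma$, $k$ and the choice of $i,j,T^\ast$.

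So the optimum does not need to stay bounded. It only needs to be of the same order as the noise floor $\sigma_\varepsilon^2/(1-\delta)$, while the stationary excess is a factor $\delta/(1-\delta)$ larger.
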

We construct a scenario where one feature has a badly misspecified initial coefficient, and a single early observation corrects it permanently, while any stationary policy must either live with the misspecification forever or permanently sacrifice the better long-run test set. 
As a result, stationary policies cause a persistent \(\Omega(\delta/(1-\delta))\) tail gap; see Appendix~\ref{app:negative-results} for a full proof.

%% file: sections_arxiv/optimal_structure.tex
\section{Optimal Solution is Explore-then-Commit}
\label{sec:optimal_structure}
We prove that the optimal solution consists of two parts: a ``dynamic'' phase where it selects different subsets of tests, then an infinite ``stationary'' phase, where it selects a fixed subset.
The dynamic phase ensures humans learn accurate coefficients for all features and lasts for $T_d$ timesteps. 

\begin{assumption}
\label{ass:regular-loss}
\(\mathcal L(S,\hat{\mathbf{a}}) = \mathbb{E}[\ell(\hat{y},y) | S,\hat{\mathbf{a}}]\) is continuous in \(\hat{\mathbf a}\) and uniformly Lipschitz: there exists \(L_\ell<\infty\) such that, for all feasible \(S\) and all \(\hat{\mathbf a},\hat{\mathbf a}'\),
\[
\left|\mathcal L(S,\hat{\mathbf a})-\mathcal L(S,\hat{\mathbf a}')\right|
\le L_\ell\left\|\hat{\mathbf a}-\hat{\mathbf a}'\right\|_1 .
\]
\end{assumption}

Equation~\ref{eq:learning-rule} specifies the magnitude of the coefficient error but not its sign.
To make the policy state depend only on exposure counts, throughout this section and the algorithmic results we use the deterministic sign-preserving branch
\begin{equation}
\label{eq:count-based-learning}
    \hat a_i(m_i)
    :=
    a_i-(a_i-\hat a_i^{(0)})\sqrt{\phi(m_i)} .
\end{equation}
Thus an exposure-count vector $\mathbf m$ determines the coefficient estimate $\hat a(\mathbf m)$.
This is the natural monotone trajectory consistent with Equation~\ref{eq:learning-rule}; the arguments below only require a deterministic count-based trajectory converging to $a_i$, so another such branch could be substituted consistently.

\begin{theorem}[Eventually stationary suffix]
\label{thm:eventually-static-suffix-regular}
Let \(\pi^\star\) be an optimal policy and $S_{t}=\pi^{\star}(t,\mathbf{m}^{(t)})$. 
Let $R := \{i\in[n]: i\in S_t \text{ for infinitely many }t\}$ be the set of features shown infinitely often, and let $\bar{R} := [n]\setminus R$, with $c_{i}$ being the number of times feature \(i\) is shown for $i \in \bar{R}$. 

Define the policy-specific limiting coefficient vector as
\[
    \bar{a}(\mathbf{a},\mathbf{c})_{i}
    :=
    \begin{cases}
    a_i, & i\in R,\\
    \hat a_i(c_i), & i\in \bar{R}.
    \end{cases}
\]
Here, $\hat a_i(c_i)$ is the frozen coefficient estimate after feature $i$ is shown $c_i$ times. 
If Assumption~\ref{ass:regular-loss} holds, then there exists \(T_d<\infty\) such that
\[
    S_t \in
    \arg\min_{\substack{S\subseteq R\\ |S|\le k}}
    \mathcal L(S,\bar{a}(\mathbf{a},\mathbf{c}))
    \qquad
    \text{for all }t\ge T_d .
\]
Moreover, if \(\mathcal L(S,\bar{a}(\mathbf{a},\mathbf{c}))\) has a unique minimizer
\(S^\star\), then for all $t\ge T_d$, $S_t=S^\star$.
\end{theorem}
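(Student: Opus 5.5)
The argument is by contradiction, using an exchange step. It relies on the count-based dynamics of Equation~\ref{eq:count-based-learning}: along the optimal trajectory, the coefficient estimate $\hat a^{(t)}=\hat a(\mathbf m^{(t)})$ converges to the limiting vector $\bar a:=\bar a(\mathbf a,\mathbf c)$.

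\emph{Step 1 (reduce to sets inside $R$).} Each $i\in\bar R$ is shown only $c_i<\infty$ times, so there is a finite $T_0$ after which every $S_t\subseteq R$ and the counts $m_i$ for $i\in\bar R$ are frozen at $c_i$.

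\emph{Step 2 (convergence of the estimates).} For $i\in R$ we have $m_i(t)\to\infty$, so $\sqrt{\phi(m_i(t))}\to 0$ and hence $\hat a_i^{(t)}\to a_i$. Therefore $\|\hat a^{(t)}-\bar a\|_1\to 0$. By Assumption~\ref{ass:regular-loss}, this gives
\[
\sup_{S}|\mathcal L(S,\hat a^{(t)})-\mathcal L(S,\bar a)|\le L_\ell\|\hat a^{(t)}-\bar a\|_1=:\eta_t\to 0 .
\]

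\emph{Step 3 (gap).} Let $\mathcal F=\{S\subseteq R:|S|\le k\}$, which is finite. Let $\mathcal L^\star=\min_{S\in\mathcal F}\mathcal L(S,\bar a)$, and let $\gamma>0$ be the gap between $\mathcal L^\star$ and the second-smallest value of $\mathcal L(\cdot,\bar a)$ over $\mathcal F$. If all values are equal, the claim is trivial.

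\emph{Step 4 (exchange).} Suppose $S_t\notin\arg\min$ for infinitely many $t$. Choose such a $t\ge T_0$ with $2\eta_{t'}<\gamma/2$ for all $t'\ge t$, and fix a minimizer $S^\star$. Build a deviating policy $\pi'$ that plays $S^\star$ at time $t$ and otherwise follows $\pi^\star$.

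The difficulty is that this changes the counts: the features in $S_t\setminus S^\star$ lose one exposure and those in $S^\star\setminus S_t$ gain one. So later estimates differ, and we must control how much this costs. Since $S^\star\subseteq R$ and $S_t\subseteq R$, every affected feature lies in $R$, so $\pi'$ still sees all of them infinitely often. The cleanest route is to let $\pi'$ replay $\pi^\star$'s future sequence of sets verbatim rather than $\pi^\star$ as a function of state. The coefficient estimates of the two policies then differ at each later time $s$ by at most
\[
\sum_{i\in S_t\triangle S^\star}|a_i-\hat a_i^{(0)}|\,\Bigl|\sqrt{\phi(m_i(s))}-\sqrt{\phi(m_i(s)\pm1)}\Bigr|.
\]
This is bounded by $2\max_i|a_i-\hat a_i^{(0)}|\sqrt{\phi(m_i(s)-1)}$, which tends to zero.

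The gain at time $t$ is at least $\gamma-2\eta_t\ge\gamma/2$. The discounted future loss change is at most $L_\ell\sum_{s>t}\delta^{s-t}\epsilon_s$, where $\epsilon_s\to0$ uniformly once $t$ is large, because $m_i(s)\ge m_i(t)\to\infty$ for every $i\in R$. Hence, for $t$ large enough, this future cost is at most $L_\ell\epsilon_t\,\delta/(1-\delta)<\gamma/2$. Then $\pi'$ strictly improves on $\pi^\star$, which is a contradiction.

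This bound on the future perturbation is the main obstacle. It is where the discounting and the monotone sign-preserving branch of Equation~\ref{eq:count-based-learning} are used: a single swap shifts counts by at most one, and the resulting coefficient differences vanish as the counts grow. For features in $S^\star\setminus S_t$ the extra exposure only increases their counts, so the same bound applies.

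\emph{Step 5 (conclusion).} Only finitely many $t$ have $S_t\notin\arg\min_{S\in\mathcal F}\mathcal L(S,\bar a)$, so setting $T_d$ to be one past the last such time gives the first claim. If the minimizer is unique, the argmin is the singleton $\{S^\star\}$, so $S_t=S^\star$ for all $t\ge T_d$.
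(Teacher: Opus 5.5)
Your proposal is correct and follows essentially the same route as the paper. Both arguments make a single-step exchange at a late time $t$, replacing $S_t$ by a limiting minimizer $S^\star$ and replaying the future action sequence open-loop. The current-round gain is at least $\gamma$ minus twice the uniform convergence error, and the Lipschitz assumption bounds the future penalty by a vanishing coefficient perturbation times $\delta/(1-\delta)$.

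The only slip is in a constant: the number of perturbed coordinates is $|S_t\triangle S^\star|\le 2k$, not $2$. This changes the constant but not the conclusion, which is unaffected whether you use your $\sqrt{\phi}$ bound or the paper's adjacent-increment bound $\eta_t$.
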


Because $\mathcal{L}(S,\hat{\mathbf{a}}^{(t)})$ converges to $\mathcal{L}(S,\bar{a}(\mathbf{a},\mathbf{c}))$, the gain from switching to the optimal tail action eventually dominates the effect on feature learning, contradicting optimality of any non-stationary late-round action (see Appendix~\ref{app:eventual-static-suffix}). 
Critically, standard explore-then-commit proofs fix an exploration length and bound the regret of a specific algorithm, whereas our proof works backwards from optimality and demonstrates the unprofitability of deviations. 
This proof gives a sufficient late-time condition for stationarity, not a tight characterization of the transition time. In particular, it does not rule out the stationary suffix beginning much earlier; matching lower bounds on the minimal such \(T_d\) would require constructing an optimal policy whose nonstationary prefix is provably necessary, so we leave such a characterization open.

We note that $\mathcal L(S,\bar{a}(\mathbf{a},\mathbf{c}))$ having multiple minimizers is essentially a degenerate case; in many common distributions and losses, only a measure-zero set of coefficients $a$ lead to multiple minimizers for $\mathcal L(S,\bar{a}(\mathbf{a},\mathbf{c}))$ (see Corollary~\ref{cor:generic-eventual-stationarity-regular}).  

%% file: sections_arxiv/complexity.tex
\section{Computational Complexity}
\label{sec:complexity}
We demonstrate that determining optimal tests when features are correlated is computationally intractable, with no FPTAS. 

\begin{theorem}[NP-hardness of test selection]
\label{thm:np_hard}
The decision version of the test selection problem is NP-hard. 
Moreover, unless \(P=NP\), the corresponding minimization problem admits no FPTAS. 
\end{theorem}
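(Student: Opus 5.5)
We reduce from a strongly NP-hard problem so that strong NP-hardness follows, and this rules out an FPTAS by the standard argument. Our choice is \textsc{Exact Cover by 3-Sets} (X3C), or alternatively \textsc{Maximum $k$-Subset}/\textsc{Clique}-type selection. The key simplification is to strip away the learning dynamics and the horizon. We take $\phi(m)=0$ for all $m\ge 1$, so one exposure teaches a coefficient perfectly, or alternatively $\phi\equiv 1$, so nothing is ever learned. Either way the initial estimates are already correct, and the dynamic problem collapses. With $\hat{\mathbf a}^{(0)}=\mathbf a$ and squared loss, every round has the same loss, so the optimal policy repeats one minimizer of $\mathcal L(S,\mathbf a)$ and $J=\min_{|S|\le k}\mathcal L(S,\mathbf a)/(1-\delta)$. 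Hence it suffices to show that the one-shot problem $\min_{|S|\le k}\mathcal L(S,\mathbf a)$ is NP-hard. The decision version is then ``is $J\le \theta$?''.

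For Gaussian $\mathbf x\sim\mathcal N(0,\Sigma)$ and squared loss with correct coefficients, $\hat y=\mathbb E[y\mid \mathbf x_S]$. Therefore
\[
\mathcal L(S,\mathbf a)=\operatorname{Var}(\varepsilon)+\mathbf a^\top\Sigma\mathbf a-\mathbf a^\top\Sigma_{\cdot S}\Sigma_{SS}^{-1}\Sigma_{S\cdot}\mathbf a .
\]
Minimizing this is exactly best-subset selection / sparse regression: choose $k$ columns to maximize the explained variance of a target. This is known to be NP-hard (Natarajan 1995) via a reduction from X3C. We will reproduce that reduction in our language.

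Given an X3C instance with universe $U$, $|U|=3q$, and triples $C_1,\dots,C_m$, we build an $n=m+|U|$-dimensional Gaussian. There are \emph{observable} features $z_j$, one per triple, and \emph{latent} features $u_e$, one per element, with $y=\sum_e u_e$ (so $a$ puts weight $1$ on the latent features and $0$ on the $z_j$). The $u_e$ are i.i.d.\ standard, and $z_j=\sum_{e\in C_j}u_e+\eta_j$ carries small independent noise $\eta_j$, which keeps $\Sigma$ positive definite. We choose $k=q$ and forbid choosing latent features. One way to do this is to make each latent feature individually nearly useless: replace each $u_e$ by a pair, or scale so that observing a latent is strictly worse than any triple. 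A cleaner way is to use Natarajan's matrix form directly, with $\mathbf x$ equal to the triple indicators and the target $\mathbf 1$.

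An exact cover selects $q$ triples whose sum reproduces $y$ up to noise, so the residual variance is at most a threshold $\theta_0$. Conversely, any $q$ features that do not form an exact cover leave some element uncovered or double-counted. This forces a residual at least $\theta_0+\gamma$, with a gap $\gamma$ that is polynomially bounded below in the encoding size. Imputation $\mathbb E[\mathbf x\mid\mathbf x_S]$ is linear in $\Sigma$, so every quantity is rational and of polynomial size.

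No FPTAS: the objective values are rationals with polynomially bounded numerators and denominators, and there is a gap between yes and no instances. Choosing $\varepsilon<\gamma/\theta_0$, which is inverse-polynomial, an FPTAS would decide X3C in polynomial time. This gives the claim unless $P=NP$.

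The main obstacle is the gap analysis. We must show that every non-cover choice of $q$ features, including mixtures with latent or noisy features, is worse by a quantifiable amount, and this must hold despite the noise needed for positive definiteness. We would first try to set the noise variance to $\sigma^2=1/\mathrm{poly}(m)$ and bound the Schur-complement residual by a perturbation argument around the noiseless case, where non-covers have residual at least $1$.
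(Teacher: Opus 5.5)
Your route is valid but differs from the paper's. Both arguments collapse the dynamic problem to the one-shot minimization of $\operatorname{Var}(y\mid x_S)$ by taking $\hat a^{(0)}=a$. The paper, however, reduces from \textsc{Vertex Cover} on cubic graphs using the near-identity correlation matrix $\Sigma=I+\rho M$, with $\rho=1/(20n)$ and $a=\mathbf 1$. Its Schur-complement expansion gives $H(S)=|\bar S|+2\rho\,e_G(\bar S)-r(S)$ with $0\le r(S)<\rho/2$. YES and NO instances are therefore separated only by an additive $\Theta(\rho)$, which is why it needs the inverse-polynomial $\varepsilon=\rho/(4(n+1))$. Your latent-variable X3C instance instead gives a constant multiplicative gap, and an arbitrarily large one if zero label noise is allowed and $\sigma^2\to0$. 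That is a stronger inapproximability result. The paper's instance shows something different: hardness already appears when every correlation is $O(1/n)$.

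The step you flag as the main obstacle is actually easy, and you do not need to forbid latent features. Rescaling them would not help anyway, since observing $cu_e$ is the same as observing $u_e$. A perturbation argument around the noiseless case would also be awkward, because that $\Sigma$ is singular. Instead, note that with $k=q$ and $|U|=3q$, any feasible $S$ other than an exact cover leaves some element $e$ uncovered: this covers fewer than $q$ features, overlapping triples, or any latent feature (support one). Then $u_e$ is independent of $(x_S,\,y-u_e)$, since both are functions of $(u_{-e},\eta,\varepsilon)$. Hence $\operatorname{Var}(y\mid x_S)=1+\operatorname{Var}(y-u_e\mid x_S)\ge 1+\sigma_\varepsilon^2$ exactly. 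An exact cover $S$ gives $\operatorname{Var}(y\mid x_S)\le\operatorname{Var}\bigl(y-\sum_{j\in S}z_j\bigr)=\sigma_\varepsilon^2+q\sigma^2$, so taking $\sigma^2=1/(2q)$ yields the gap.

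Finally, ``strongly NP-hard implies no FPTAS'' applies to integer-valued objectives, and these losses are not integer-valued. It is your explicit gap argument, as in the paper, that actually carries the no-FPTAS claim.
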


The proof reduces from vertex cover on cubic graphs~\citep{vertex_cover_hard} by encoding graph adjacency in the covariance matrix:
covering edges corresponds to selecting tests that drive the
conditional-covariance loss below a threshold (see  Appendix~\ref{app:hardness-proof} for a full proof).
The lack of an FPTAS eliminates any hope of finding polynomial time algorithms that are near-optimal. 

%% file: sections_arxiv/algorithm.tex
\section{Optimal and Approximate Solutions}
Motivated by the computational hardness of optimal testing, we show that a dynamic programming-based solution is optimal for the finite-horizon problem, then develop an approximation. 

\subsection{An Optimal Finite-Horizon Dynamic Programming Solution}
\label{sec:finite_horizon_dp}

Throughout this subsection, $T$ denotes the finite planning horizon, so the policy chooses actions for rounds $t=0,\dots,T-1$.
Let $\hat a(\mathbf m)$ be the count-based coefficient estimate from Equation~\ref{eq:count-based-learning}, and let $\mathbf{1}_{S}$ denote the 0-1 vector with 1s for indices in $S$.
Then we can write the total loss recursively using a value function $V_{t}(\mathbf{m})$ that selects the optimal tests at each timestep.
\begin{align}
V_0(\mathbf{0}) &= 0, \quad V_0(\mathbf{m} \neq \mathbf{0}) = +\infty, \\
V_{t+1}(\mathbf{m})
&=
\min_{\substack{S \subseteq [n],\, |S|\le k \\ \mathbf{m} - \mathbf{1}_S \ge 0}}
\left\{
V_t(\mathbf{m} - \mathbf{1}_S)
+
    \delta^t \mathcal L\bigl(S,\hat a(\mathbf{m} - \mathbf{1}_S)\bigr)
\right\}.
\end{align}

\begin{proposition}[DP Optimality]
\label{prop:finite-horizon-dp}
For any $t\leq T$, \(V_t(\mathbf m)\) is the exact minimum total loss among all length-\(t\)
action sequences that offered each test $m_{i}$ times.
Consequently, \(\min\limits_{\mathbf m}V_T(\mathbf m)\) is the exact optimal finite-horizon
loss.
Assuming that evaluating $\mathcal L(S,\hat{a}(\mathbf m))$ takes $O(1)$ time, the time complexity for computing the optimal policy is $O\!\left(T^n \sum_{j=0}^k \binom{n}{j}\right)$ and the memory
complexity is $O(T^n)$.
\end{proposition}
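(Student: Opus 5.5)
The argument is an induction on $t$. The key fact is that the per-round loss depends only on the chosen set $S$ and the current exposure-count vector. Under the count-based trajectory (Equation~\ref{eq:count-based-learning}), the estimate at round $t$ is $\hat a(\mathbf m^{(t)})$, where $\mathbf m^{(t)}$ counts the exposures in rounds $0,\dots,t-1$. So for a length-$t$ action sequence $(S_0,\dots,S_{t-1})$ the total loss is $\sum_{s<t}\delta^s\mathcal L(S_s,\hat a(\mathbf m^{(s)}))$, with $\mathbf m^{(s+1)}=\mathbf m^{(s)}+\mathbf 1_{S_s}$ and $\mathbf m^{(0)}=\mathbf 0$. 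I will use the convention that the minimum over the empty set is $+\infty$. The claim to prove is that $V_t(\mathbf m)$ equals the minimum of this sum over all sequences whose final count vector is $\mathbf m^{(t)}=\mathbf m$.

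\emph{Base case.} For $t=0$, only the empty sequence exists. It has loss $0$ and final count $\mathbf 0$, which matches $V_0$.

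\emph{Inductive step.} Fix $\mathbf m$. Any length-$(t+1)$ sequence ending at $\mathbf m$ splits into a length-$t$ prefix ending at $\mathbf m-\mathbf 1_{S}$, with $S=S_t$, followed by the last action $S$. The constraint $\mathbf m-\mathbf 1_S\ge 0$ is necessary for such a prefix to exist. The last round's contribution is $\delta^t\mathcal L(S,\hat a(\mathbf m-\mathbf 1_S))$, and it depends on the prefix only through its endpoint $\mathbf m-\mathbf 1_S$. This is the Markov/optimal-substructure property.
\begin{itemize}
\item \emph{Lower bound.} For fixed $S$, minimizing over prefixes gives exactly $V_t(\mathbf m-\mathbf 1_S)$ by the inductive hypothesis. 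Minimizing over $S$ then shows every sequence costs at least $V_{t+1}(\mathbf m)$.
\item \emph{Achievability.} Conversely, concatenating an optimal prefix with the minimizing $S$ attains $V_{t+1}(\mathbf m)$. The minima exist because they are taken over finite sets.
\end{itemize}
This gives exactness of $V_t$. Minimizing $V_T(\mathbf m)$ over $\mathbf m$ then ranges over all length-$T$ sequences, which yields the optimal finite-horizon loss. The optimal policy is recovered by storing argmins and backtracking.

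\emph{Main obstacle.} The only real subtlety is justifying that the state $\mathbf m$ suffices, i.e.\ that the loss does not depend on the order of past actions. This is exactly where the deterministic sign-preserving branch is needed: the general rule (Equation~\ref{eq:learning-rule}) fixes only the magnitude of the error, so the estimate would not be a function of $\mathbf m$ alone.

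\emph{Complexity.} After $t\le T$ rounds each coordinate satisfies $m_i\le t\le T$, so there are at most $(T+1)^n=O(T^n)$ reachable states per layer. Each state requires scanning $\sum_{j=0}^k\binom nj$ candidate sets, each costing $O(1)$. The naive count is therefore $O(T\cdot T^n\sum_j\binom nj)$. To reach the stated bound, I would use the identity $\sum_i m_i\le kt$ and the fact that every state with the stated coordinate bounds is processed at most once across layers. More precisely, I would index the DP so that each pair $(t,\mathbf m)$ with $\mathbf m$ reachable is handled once, and bound the number of such pairs by $O(T^n)$. Here $t$ is essentially determined by $\mathbf m$ up to idle or under-budget rounds, and otherwise it can be absorbed by restricting to $|S|=k$ when $n\ge k$, if exactly $k$ tests are assumed. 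If this tightening fails, I would note that the extra factor $T$ is what the counting naturally gives.

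\emph{Memory.} Only layers $t$ and $t+1$ need to be kept, each of size $O(T^n)$; for policy recovery, one stored argmin per state suffices.
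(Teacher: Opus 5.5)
Your optimality argument is the paper's: induction on $t$, peeling off the last action so that a length-$(t+1)$ sequence ending at $\mathbf m$ becomes a length-$t$ prefix ending at $\mathbf m-\mathbf 1_S$ followed by $S$. It has the same lower bound from the inductive hypothesis, achievability by concatenation, and backtracking through stored argmins. Your remark that the count-based branch (Equation~\ref{eq:count-based-learning}) is what makes $\mathbf m$ a sufficient state matches the paper's opening line.

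The gap is in the complexity part, and the tightening you sketch cannot work for the DP as stated. The claim that each $\mathbf m$ is processed in at most one layer is false. Actions with $|S|<k$, including $S=\emptyset$, are allowed (they are counted in $\sum_{j=0}^k\binom{n}{j}$), so once $\mathbf m$ is reachable at layer $t$ it stays reachable at every later layer by appending empty rounds. Every $\mathbf m$ with all $m_i\le T/(2n)$ is reachable by layer $T/2$, so the number of reachable pairs $(t,\mathbf m)$ is $\Theta(T^{n+1})$ for fixed $n,k$; already for $n=k=1$ the pairs $0\le m\le t\le T$ number $\Theta(T^2)$.

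Restricting to $|S|=k$ is not without loss of generality. Take $\Sigma=I$, $a=(1,0,0)$, $\hat a=(1,c,c)$ with $c\neq 0$, and $k=2$. Proposition~\ref{prop:one-step-loss-corr} gives loss $\sigma_\varepsilon^2$ for $\{1\}$ but at least $\sigma_\varepsilon^2+c^2$ for every two-element set. So for $T=1$ the unique optimum uses fewer than $k$ tests.

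The paper's own proof reaches $O(T^n\sum_j\binom{n}{j})$ only by multiplying the $(T+1)^n$ count vectors by the number of actions. That counts one layer and drops the layer index. The bound this argument actually supports is your naive $O(T^{n+1}\sum_j\binom{n}{j})$. The stated bound holds only in the exactly-$k$ regime, where $t=\sum_i m_i/k$ is determined by $\mathbf m$.

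The same issue affects memory. Keeping two value layers is fine, but backtracking needs a parent pointer for each pair $(t,\mathbf m)$, because the minimizing predecessor of $\mathbf m$ depends on $t$ through $V_t$ and $\delta^t$. That is again $O(T^{n+1})$ outside the exactly-$k$ regime, as in Algorithm~\ref{algo:dp}, which stores $P_{t+1}$ for every layer. So rather than hedging, state the $T^{n+1}$ bound, or the exactly-$k$ hypothesis under which $O(T^n)$ holds, explicitly.
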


After filling the table for the finite horizon $T$, we choose a terminal state minimizing \(V_T(\mathbf m)\) and backtrack to recover the optimal policy.
Each test can be selected at most $T$ times, and there are $n$ tests, yielding a memoization array of size $O(T^{n})$.
For each count state, we consider $\sum_{j=0}^k\binom{n}{j}$ feasible subsets, yielding a total runtime of $O\!\left(T^n \sum_{j=0}^k\binom{n}{j}\right)$ (proof in Appendix~\ref{app:finite-dp-proof}).

\begin{algorithm*}[t]
  \caption{Optimal Dynamic Programming Solution for Finite Horizons}
  \label{algo:dp}
  \begin{algorithmic}[1]
  \Require $\mathcal{L}$, $a$, $\phi$, horizon $T$, budget $k$, and discount $\delta$
  \State $V_0(\mathbf 0) \leftarrow 0$; $V_0(\mathbf m)\leftarrow +\infty$ for all $\mathbf m\neq \mathbf 0$
  \For{$t = 0, 1, \dots, T-1$}
    \State $V_{t+1}(\mathbf m)\leftarrow +\infty$ for all $\mathbf m \in \{0,\dots,T\}^n$
    \For{$\mathbf m$ with $V_t(\mathbf m)<+\infty$}
      \For{$S \subseteq [n]$ with $|S| \le k$}
        \State $c \leftarrow V_t(\mathbf m)+\delta^t \mathcal L(S,\hat a(\mathbf m))$
        \If{$c < V_{t+1}(\mathbf m+\mathbf 1_S )$}
          \State $V_{t+1}(\mathbf m+\mathbf 1_S)\leftarrow c$, $P_{t+1}( \mathbf m+\mathbf 1_S )\leftarrow(\mathbf m,S)$
        \EndIf
      \EndFor
    \EndFor
  \EndFor
  \State Choose $\mathbf m_T^\star\in\arg\min_{\mathbf m}V_T(\mathbf m)$
  \State Backtrack through $P_T,\dots,P_1$ to recover $S_0^\star,\dots,S_{T-1}^\star$
  \State Return $S_0^\star,\dots,S_{T-1}^\star$
  \end{algorithmic}
\end{algorithm*}

\subsection{Approximating Infinite Horizon with Truncated Finite-Horizon DP}
\label{sec:truncated_dp}
The finite-horizon DP from Section~\ref{sec:finite_horizon_dp} can approximate the discounted infinite-horizon problem.
For the infinite-horizon objective, we choose a truncation length \(\bar{T}\) and run the finite-horizon DP for only the first \(\bar{T}\) rounds.
To obtain an infinite-horizon policy, this prefix can be followed by any suffix of tests. 
We prove that, no matter how this suffix is chosen, we can bound performance based on $\bar{T}$. 
\begin{proposition}[Truncated-horizon DP gives an additive approximation]
\label{prop:cutoff-additive}
Define
\[
    J(\delta,\pi)
    :=
    \sum_{t=0}^\infty \delta^t \mathcal L(S_t,\hat a(\mathbf m^{(t)})),
    \qquad
    J_{\bar{T}}(\delta,\pi)
    :=
    \sum_{t=0}^{\bar{T}-1} \delta^t \mathcal L(S_t,\hat a(\mathbf m^{(t)})).
\]
Let $\pi^\star$ be the optimal policy for the infinite-horizon objective $J(\delta,\pi)$, and let $\pi^{(\bar{T})}$ be any infinite-horizon policy whose first $\bar{T}$ actions are optimal for the finite-horizon objective
$J_{\bar{T}}(\delta,\pi)$.
If $\mathcal{L}(S,\hat{a}(\mathbf{m})) \leq \bar{L}$, then
\[
    J(\delta,\pi^{(\bar{T})}) - J(\delta,\pi^\star)
    \le
    \frac{\bar{L}\,\delta^{\bar{T}}}{1-\delta}.
\]
Consequently, additive error is at most $\varepsilon>0$ once
\[
    \bar{T}
    \ge
    \left\lceil
    \frac{\log\!\bigl(\bar{L}/((1-\delta)\varepsilon)\bigr)}
    {-\log\delta}
    \right\rceil.
\]
\end{proposition}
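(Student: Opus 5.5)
The plan is to split each objective into its first $\bar T$ rounds and its tail, and then compare the two policies term by term. Since $\mathcal L\ge 0$ (because $\ell\ge 0$), we have, for any policy $\pi$,
\[
J_{\bar T}(\delta,\pi)\le J(\delta,\pi)\le J_{\bar T}(\delta,\pi)+\sum_{t=\bar T}^{\infty}\delta^t\bar L
= J_{\bar T}(\delta,\pi)+\frac{\bar L\,\delta^{\bar T}}{1-\delta}.
\]
Here the upper bound uses $\mathcal L(S,\hat a(\mathbf m))\le\bar L$ uniformly over all states and actions. This holds no matter which suffix $\pi^{(\bar T)}$ uses after round $\bar T$.

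Next, I would note that the first $\bar T$ actions of $\pi^\star$ form a feasible length-$\bar T$ action sequence for the finite-horizon problem. The states $\mathbf m^{(t)}$ for $t<\bar T$ are determined only by the actions taken before $t$. Hence $J_{\bar T}(\delta,\pi^\star)$ equals the finite-horizon loss of that prefix, and by Proposition~\ref{prop:finite-horizon-dp} it is at least the finite-horizon optimum, which is $J_{\bar T}(\delta,\pi^{(\bar T)})$. Chaining these facts gives
\[
J(\delta,\pi^{(\bar T)})\le J_{\bar T}(\delta,\pi^{(\bar T)})+\frac{\bar L\delta^{\bar T}}{1-\delta}\le J_{\bar T}(\delta,\pi^\star)+\frac{\bar L\delta^{\bar T}}{1-\delta}\le J(\delta,\pi^\star)+\frac{\bar L\delta^{\bar T}}{1-\delta},
\]
which is the claimed additive bound.

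For the second claim, I would solve $\bar L\delta^{\bar T}/(1-\delta)\le\varepsilon$ for $\bar T$. Taking logarithms gives $\bar T\log\delta\le\log(\varepsilon(1-\delta)/\bar L)$. Dividing by $\log\delta<0$ flips the inequality to $\bar T\ge \log(\bar L/((1-\delta)\varepsilon))/(-\log\delta)$, and taking the ceiling keeps $\bar T$ an integer. If the logarithm is negative, any $\bar T\ge 0$ already works.

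I do not expect a real obstacle. The only point needing care is that the finite-horizon objective of a prefix does not depend on later actions, so that the restriction of $\pi^\star$ is a valid competitor. This also covers the degenerate case $\delta=0$, where the convention $\delta^0=1$ applies and the bound is trivial.
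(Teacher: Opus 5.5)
Your proposal is correct and matches the paper's proof: bound the tail by $\bar L\delta^{\bar T}/(1-\delta)$ using $0\le\mathcal L\le\bar L$, then chain $J(\pi^{(\bar T)})\le J_{\bar T}(\pi^{(\bar T)})+\text{tail}\le J_{\bar T}(\pi^\star)+\text{tail}\le J(\pi^\star)+\text{tail}$. Your explicit solving for $\bar T$ fills in the ``consequently'' part that the paper leaves implicit, and appealing to Proposition~\ref{prop:finite-horizon-dp} is harmless but unnecessary, since prefix optimality of $\pi^{(\bar T)}$ is already a hypothesis.
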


The dependence on the suffix-agnostic tail term in Proposition~\ref{prop:cutoff-additive} is tight in the worst case; see Remark~\ref{rem:cutoff-tail-tightness}.

In Appendix~\ref{app:truncation-proof}, we prove this by bounding $J(\delta,\pi)$ term-wise, and noting that $\pi^{(\bar{T})}$, which is the policy computed by the finite-horizon DP algorithm, is optimal for the first $\bar{T}$ rounds.
In practice, we can compute the stationary completion by repeating the last selected set of tests infinitely. 
Because the DP algorithm proceeds forward in time, we can run the algorithm until test selections converge.

%% file: sections_arxiv/experiments.tex
\section{Experiments}
\label{sec:experiments}
We empirically investigate how different policies affect test selection and performance using synthetic instances that isolate the impact of different parameters.
Unless varied explicitly, we fix $\ell(\hat{y},y) = \left(y-\hat{y} \right)^2$, $\delta=0.99$, $\mathcal{X} = \mathcal{N}(0,\Sigma)$, and $\mathcal{E} = \mathcal{N}(0,\sigma_{\varepsilon}^2)$ with $\sigma_\varepsilon^2=10^{-3}$.
We report results over 20 random instances of $a_{i} \sim \mathrm{Uniform}[0,1]$ and $\hat{a}_{i} \sim \mathrm{Uniform}[0,1]$, together with 95\% confidence intervals. 
Appendix~\ref{sec:beam} explores policy performance for larger values of $n$ and $k$.
Appendix~\ref{sec:misspecification} studies misspecification in $\hat{\mathbf{a}}$, $\Sigma$, the imputation oracle, and $\phi$, showing that perturbations of up to 50\% reduce performance by less than 20\%.
Appendix~\ref{sec:model-variant-experiments} reports the variants of \(\mathcal X\), \(\mathcal L\), and \(\phi\).
That is, our results are robust to parameter misspecification and persist across the tested changes to the feature distribution, loss function, and human learning curve.
We report \emph{retained optimality} as the inverse loss ratio:
\[
    \mathrm{Retained}(\pi)
    :=
    \frac{J(\delta,\pi^\star)}{J(\delta,\pi)}.
\]
Thus $100\%$ means that a candidate policy matches the finite-horizon optimal policy, while smaller percentages measure the loss from restricting the policy class or truncating the planning horizon.
All experiments were run on 16 GB of RAM and completed within 2 hours. 

\subsection{Stationary Policy Performance}
We compare the best stationary policy against the optimal policy.
For $n=2$ and $k=1$, define 
\[
\Sigma(\rho)=
\begin{pmatrix}
1 & \rho\\
\rho & 1
\end{pmatrix},
\qquad 0\le \rho \le 0.99,
\]
We vary $\rho$, and for each $\rho$, we either vary $\alpha$ in the human learning function  $\phi(m)=\alpha^{-2m}$ with $\delta=0.99$ or the discount factor $\delta \in \{0.85,0.90,0.95,0.99\}$ while holding $\alpha=1.10$.

\begin{figure}
    \centering
    \includegraphics[width=\linewidth]{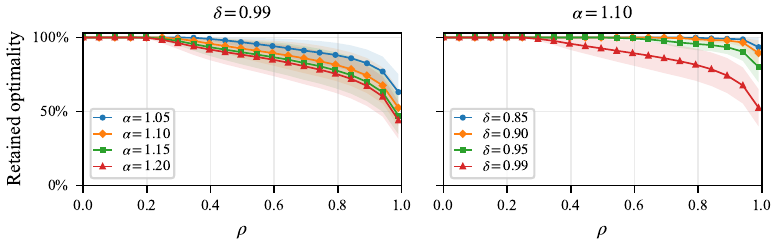}
    \caption{Retained optimality of the best stationary policy relative to the finite-horizon DP algorithm for $n=2,k=1$. For both, we vary the feature correlation $\rho$ when $\mathcal{X} = \mathcal{N}(0,\Sigma)$, while on the left we vary $\alpha$ in the human learning function $\phi(m)=\alpha^{-2m}$, and on the right we vary the discount factor $\delta$. As features are more correlated, stationary policies perform worse, with worse performance as human learning is faster or the discount factor is larger.}
    \label{fig:stationary-policy-retained-optimality}
\end{figure}

Figure~\ref{fig:stationary-policy-retained-optimality} shows that stationary policies are optimal when features are weakly correlated, but perform worse as correlation increases. This gives a finite-horizon empirical counterpart to Theorem~\ref{thm:stationary_bad}: correlation creates value for a dynamic prefix that shapes the human's estimates before exploiting a fixed suffix.
For example,increasing $\rho$ from $0.5$ to $0.99$ reduces performance from 93\% to 53\%. 
Stationary policies perform worse as humans learn faster (larger $\alpha$) or when the discount factor ($\delta$) is larger. For example, increasing $\alpha$ from $1.05$ to $1.20$ at $\rho=0.5$ reduces the retained performance from 97\% to 89\% of optimal. 

\begin{figure}
    \centering
    \includegraphics[width=\linewidth]{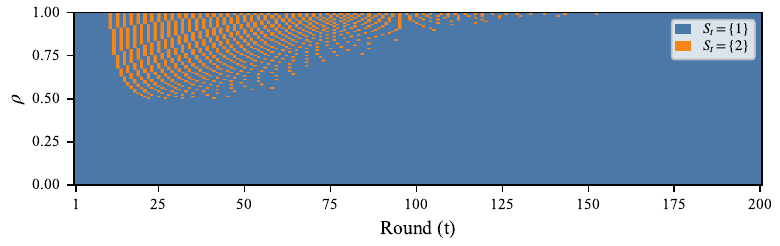}
    \caption{Finite-horizon DP algorithm for one fixed $(n,k)=(2,1)$ instance with $\alpha=1.05$ and $\delta=0.99$. Each row fixes a correlation level $\rho$, each column is a round $t$, and the color indicates whether the DP algorithm selects $S_t=\{1\}$ or $S_t=\{2\}$. Larger $\rho$ leads to a longer exploratory prefix before the policy settles on its stationary suffix action.}
    \label{fig:optimal-action-map-by-alpha}
\end{figure}

To better understand \textit{why} the DP algorithm outperforms stationary policies, we visualize what tests the optimal DP algorithm selects for one instance in 
Figure~\ref{fig:optimal-action-map-by-alpha}.
The DP algorithm matches stationary algorithms when $\rho \leq 0.5$, while increases in feature correlation lead to longer exploration periods. 
For example, for $\rho=0.6$, the DP algorithm explores until $T_d = 49$, while $\rho=0.99$ explores past $T_d>150$. 

\subsection{Dynamic Exploration Length}
\label{sec:dynamic_phase_length}
As shown in Figure~\ref{fig:optimal-action-map-by-alpha}, the dynamic exploration length \(T_d\) varies based on the level of correlation.
Concretely, we compare the theorem-based sufficient certificate for \(T_d\), from the proof of Theorem~\ref{thm:eventually-static-suffix-regular}, with the number found empirically.
We vary $\rho$ for settings $n=2,k=1$ and $n=3,k=2$.
For each random draw, \(T_d\) is an integer: we measure it by solving finite-horizon DP at the configured horizon for each setting and scanning backward to find the last round at which the policy switches actions.

\begin{figure*}[t]
    \centering
    \includegraphics[width=\linewidth]{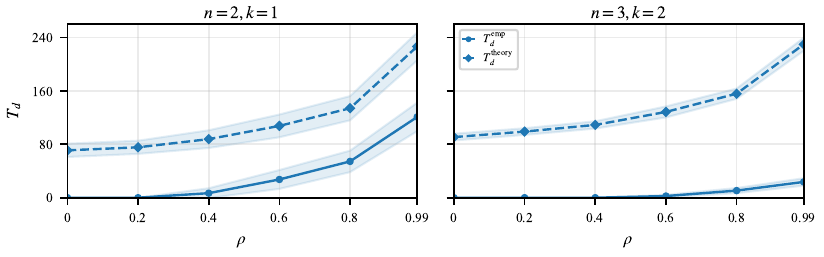}
    \caption{Dynamic phase length $T_d$ versus correlation $\rho$ for $\alpha=1.05$ and $\delta=0.99$, for $n=2,k=1$ and $n=3,k=2$. Curves show sample means over 20 random draws of $(a,\hat a^{(0)})$: solid curves are empirical integer suffix-start times from finite-horizon DP, and dashed curves are conservative sufficient certificates from Theorem~\ref{thm:eventually-static-suffix-regular}. As feature correlations increase, dynamic phase lengths also increase, though at slower rates empirically than theoretically predicted.}
    \label{fig:static-suffix-thresholds}
\end{figure*}

Figure~\ref{fig:static-suffix-thresholds} demonstrates that $\rho$ increases dynamic phase lengths $T_{d}$ both empirically and theoretically. 
For both choices of $(n,k)$, we find that the theoretical bound is much larger than the one found in practice because we prove a conservative upper bound.
Moving from $(n,k)=(2,1)$ to $(3,2)$ substantially shortens the dynamic phase: across the plotted $\rho$ values, the sample mean of the integer-valued empirical \(T_d\) is at most $23.65$ for $n=3,k=2$, while it reaches $121.00$ for $n=2,k=1$ at $\rho=0.99$.
This occurs because when $k=2$, each test set selection reveals more information, and the value of exploration decreases.

\subsection{Truncated DP Performance and Runtime}
While the DP algorithm is optimal for finite horizons, it runs slowly, preventing its use for large values of $n$. 
We show that the truncated DP algorithm can reduce runtime without sacrificing performance too much. 
We vary $\bar{T}$ and compute the retained optimality compared to the DP algorithm when $n=2$ and $k=1$ (we run larger values of $n$ in Appendix~\ref{sec:beam}). 

\begin{figure}
    \centering
    \includegraphics[width=\linewidth]{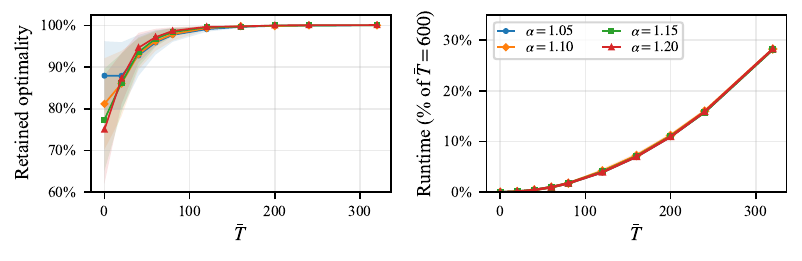}
    \caption{Truncated DP performance-runtime tradeoff for $n=2,k=1$, while fixing $\rho=0.8$, and $\delta=0.99$. 
    We vary $\bar{T}$ and compare performance against the optimal policy (left) and runtime normalized by the full $\bar{T}=600$ DP runtime (right). At $\bar{T} = 120$, truncated DP retains at least $99\%$ of optimality across all curves while using $4\%$ of the $\bar{T}=600$ runtime.}
    \label{fig:truncated-dp-runtime-quality}
\end{figure}

Figure~\ref{fig:truncated-dp-runtime-quality} demonstrates that we can recover much of the performance with small $\bar{T}$. 
At $\bar{T}=80$ for $\alpha=1.10$, we recover $98\%$ of the optimal performance, while running for only $2\%$ of the full $\bar{T}=600$ DP runtime.
At $\bar{T}=120$, all plotted settings retain at least $99\%$ of the optimal performance while using at most $4\%$ of the $\bar{T}=600$ runtime.
Therefore, the stationary-suffix structure makes the DP usable in regimes where solving a long horizon exactly would be unnecessarily expensive.
In Appendix~\ref{sec:beam}, we show that a beam search-based algorithm can speed up the truncated DP solution while preserving much of the optimality, demonstrating how such algorithms can be used in practice. 

%% file: sections_arxiv/related.tex
\section{Related Works}
\paragraph{Human-AI Collaboration} Work in human-AI collaboration develops models outlining human and AI interactions~\citep{no_free_lunch_human_ai,algorithmic_delegates}, grounded in real-world studies of these phenomena~\citep{intuition_human_ai_decisions,principles_limits_algorithm}.
On the modeling side, prior work establishes how decision quality can be decomposed into various factors, including communication between humans and AI~\citep{explanations_means_to_end}, the level of complementarity between humans and AI~\citep{human_expertise}, and human reliance on AI~\citep{reliance_decisions}.
Empirical studies on explanations and over-reliance motivate theoretical models of how humans update beliefs under AI guidance~\citep{cognitive_forcing_functions,application_grounded_explanation}.
Such systems produce not only short-term decisions, but also longer-term learning effects which impact what a human knows~\citep{humans_learn}. 
Within this line of work, most directly related is~\citet{ai_assisted_decisions}, which demonstrates the optimality of stationary policies for human learning under the assumption that features are uncorrelated; we show that once feature correlation is introduced, stationary policies can perform arbitrarily poorly, and derive optimal algorithms in this scenario.

\paragraph{Explore-then-Commit} A popular framework for balancing exploration and exploitation is ``explore-then-commit'', widely used in the bandit literature~\citep{explore_then_commit,bandit_books}. 
The key idea is to explore for a fixed budget of time, then exploit the best discovered strategy, resulting in sublinear $O(T^{2/3})$ regret. 
Our setting differs from traditional multi-armed bandit setting because exploration teaches humans about different features rather than learning arm rewards. 
This changes the problem significantly: the effective reward of each arm evolves endogenously with the recommendation policy, as the human's imputation of unseen features updates through experience.
In this sense, our work is related to correlated arms ~\citep{correlated_arms}, with the central distinction that human interactions with tests evolve as they learn, which contrasts with correlated but fixed arms studied in the literature. 

\paragraph{Machine Teaching} 
The machine teaching literature focuses on how to best select training to distill information to a student model~\citep{machine_teaching}.
Work in this field establishes how many examples are needed to teach a function by establishing a ``teaching dimension''~\citep{complexity_teaching,teaching_computational_learning} which has influenced tutoring systems in education~\citep{machine_teaching_education}. 
Our work differs in two ways: we focus on settings where the AI teacher selects tests rather than training samples and features are correlated so revealing one gives information about another. 

%% file: sections_arxiv/discussion.tex
\section{Discussion}
\label{sec:discussion}
\paragraph{Relaxing Assumptions}
While our results rely on the assumption that parameters are fully known, we discuss how our results hold when this assumption is relaxed. 
In Appendix~\ref{sec:misspecification}, we demonstrate that our results are robust to misspecification in quantities such as $a$ and $\mathcal L$. 
Additionally, while we assume access to a parametric learning rule $\phi$, we do not impose a specific function. 
Instead, in practice, $\phi$ can be retrospectively fit to human learning data, making the model broadly applicable.

\paragraph{Limitations}
First, our model captures test cost through a uniform cardinality budget \(|S_t|\le k\), but does not distinguish heterogeneous costs, delays, or risks.
Incorporating such costs would introduce a tradeoff between information value and resource use, potentially changing the optimal policy.
Second, the AI still needs enough information to evaluate how a test set affects the human's future predictions.
Our misspecification experiments suggest robustness to moderate errors, while formal regret or robustness guarantees are left to future work.
Finally, the framework can extend to structured nonlinear models, such as \(y=f(\sum_i a_i x_i)+\varepsilon\), but general nonlinear relationships may require richer information about the conditional distribution of unseen features.

\paragraph{Explore-then-commit in other Human-AI Settings} 
Our results demonstrate the importance of exploration in human-AI collaborations, which is an important but missing part of many human-AI collaboration paradigms. 
For example, the  indistinguishability framework~\citep{human_expertise} relies on human knowledge to assist AI, and joint exploration between humans and AI could reveal further information on where exactly humans have expertise. 
More broadly, any human-AI system where AI shapes human exposure should account for this exploration-commitment tradeoff.


%% file: sections_arxiv/appendix.tex
\section{Speeding up Dynamic Programming with Beam Search}
\label{sec:beam}
The exact forward DP is layered by prefix length: layer \(t\) contains the reachable count states \(\mathbf m\) with \(\sum_i m_i=t\), together with the best accumulated discounted prefix loss \(V_t(\mathbf m)\) and a backpointer for one prefix attaining it.
Storing every state in each layer can become expensive as $n$, $k$, or $T$ grows.
We therefore evaluate a beam-search variant of the truncated DP.
After forming the successor layer \(t+1\), the algorithm ranks the candidate prefix states and keeps only the best \(M\) states before the next expansion.
Here a prefix state is a count vector \(\mathbf m\) at the end of a length-\(t+1\) action prefix, together with its best prefix loss and backpointer; \(M\) is the beam width.
The pruning score for a candidate state \(\mathbf m\) is its accumulated prefix loss plus a closed-form estimate of the remaining discounted loss through \(T=600\) obtained by appending the best stationary suffix from \(\mathbf m\).
Thus states are retained when they are promising for the full evaluation horizon rather than only for short-run loss.
This beam search is not guaranteed to preserve the globally optimal path, but it greatly reduces the number of prefix states expanded.

We evaluate this approximation with a matched-runtime comparison on larger feature-selection problems.
The left panel of Figure~\ref{fig:beam-vs-truncated} uses \((n,k)=(10,2),(15,3),(20,4),(25,5)\), so the plotted settings keep \(n/k=5\) while increasing the full action spaces up to \(\binom{25}{5}=53{,}130\).
Exact finite-horizon DP at \(T=600\) is infeasible for these instances, so we compare beam search directly against truncated DP under matched runtimes.
For each beam runtime, we take the best truncated-DP policy whose runtime is no larger on the same random instance, and report the ratio of this time-matched truncated loss to the beam loss.
Values above \(100\%\) therefore mean that beam search achieves lower loss than truncated DP within the same time budget.
All runs use the full \(\binom{n}{k}\) action space, \(\rho=0.99\), \(\alpha=1.05\), and three random draws of \(a,h_0\sim\mathrm{Uniform}[0,1]\).
Thus Figure~\ref{fig:beam-vs-truncated} reports beam-search performance on larger full-action instances.

\section{Misspecification Experiments}
\label{sec:misspecification}
We run experiments to understand how robust our results are to misspecification in parameters. 
We use a fixed two-feature instance rather than resampling instances: $n=2$, $k=1$, $a=(1.0,0.8)$, $\hat a^{(0)}=(0.2,1.3)$, $\rho=0.8$, $\delta=0.99$, $\varepsilon_{\mathrm{noise}}=10^{-3}$, and $\phi(m)=1.10^{-2m}$.
For each trial, the AI first computes a DP policy using possibly misspecified inputs, and we then evaluate the resulting policy in the unperturbed true model.
We perturb one input at a time: $\hat a^{(0)}$ by a random vector with norm $\eta\|\hat a^{(0)}\|_2$; $\phi$ by replacing $\log \alpha$ with $(1\pm\eta)\log\alpha$; the loss oracle $\mathcal L$ by replacing the equicorrelation parameter $\rho$ with $\rho(1\pm\eta)$ only in the covariance loss used to solve the DP; and the conditional-imputation rule $\mathbb E[\mathbf x\mid \mathbf x_S]$ by replacing the conditional-mean coefficient $\rho$ with $\rho(1\pm\eta)$ only in the imputation map used to solve the DP. Perturbed correlations are clipped to keep $|\rho|<1$.
For each perturbation level $\eta$, the AI computes the $T=600$ DP policy using these perturbed inputs and then deploys the recovered stationary-suffix policy in the true model for the same horizon.
Thus the imputation perturbation represents imperfect AI knowledge of the conditional-imputation rule, not humans imputing incorrectly during prediction; the $\mathcal L$ perturbation isolates the effect of misspecifying the covariance parameter in the loss oracle.
For every perturbation type and every value of $\eta$, we run $80$ repeats. In each repeat, scalar perturbations sample the sign uniformly from $\{+1,-1\}$, while $\hat a^{(0)}$ perturbations sample an independent random direction.

\begin{figure}
    \centering
    \begin{minipage}{0.49\linewidth}
        \centering
        \includegraphics[width=\linewidth]{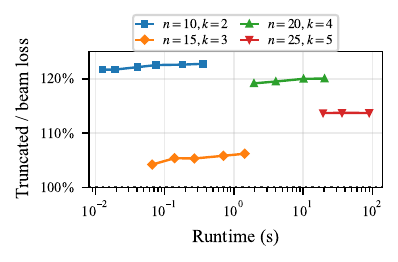}
    \end{minipage}
    \hfill
    \begin{minipage}{0.49\linewidth}
        \centering
        \includegraphics[width=\linewidth]{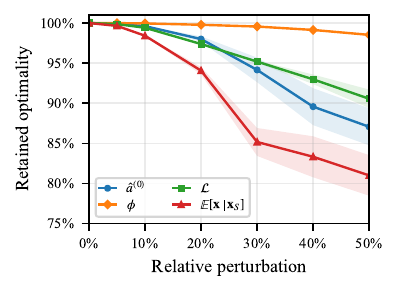}
    \end{minipage}
    \caption{Beam-search scaling and parameter-misspecification robustness. Left: matched-runtime comparison between beam search and truncated DP at evaluation horizon $T=600$ for full-action instances \((n,k)=(10,2),(15,3),(20,4),(25,5)\), with \(\rho=0.99\), \(\alpha=1.05\), and three random draws of \((a,\hat a^{(0)})\). The plotted value is the loss ratio of the best time-matched truncated-DP policy to the beam policy, so values above \(100\%\) indicate lower loss for beam search. Right: retained optimality after solving the \(T=600\) DP with one misspecified planning input at perturbation level \(\eta\), then evaluating the recovered stationary-suffix policy in the true fixed \((n,k)=(2,1)\) model. Right-panel bands show mean \(\pm 1.96\) standard errors over 80 perturbation repeats.}
    \label{fig:beam-vs-truncated}
    \label{fig:parameter-misspecification-robustness}
\end{figure}

The right panel of Figure~\ref{fig:parameter-misspecification-robustness} shows that the dynamic policy is robust to moderate misspecification.
We are most robust to perturbations in $\phi$; even at 50\% perturbation, retained optimality is only impacted by 1.5\%.
Conversely, we are least robust to the conditional-imputation rule; at $\eta=20\%$, this leads to a 5.9\% drop, and at $\eta=50\%$, this leads to a 19.0\% drop in retained optimality.
The loss oracle is less sensitive than the conditional-imputation rule but still matters at large perturbations, with a 9.4\% drop at $\eta=50\%$.
At moderate levels of perturbation, such as $\eta=10\%$, all four perturbation types retain at least $98.4\%$ of the reference dynamic policy's performance on average.
This suggests that the policy's value is not driven by exact recovery of every early action, but by avoiding severe misspecification of the correlation and conditional-imputation structure.

\input{sections_arxiv/arxiv_experiments}

\section{Proofs}
\label{sec:proofs}

\subsection{Gaussian One-Step Loss Identity}
\label{app:one-step-loss}

\begin{proposition}[One-step loss under correlated features]
\label{prop:one-step-loss-corr}
Assume \(x^{(t)}\sim\mathcal N(0,\Sigma)\) with \(\Sigma\succ0\), and assume the noise \(\varepsilon^{(t)}\) is independent of \(x^{(t)}\), has mean zero, and has variance \(\sigma_\varepsilon^2\).
Fix a round $t$ and a selected subset $S_t \subseteq [n]$, and let
$\bar{S}_t = [n] \setminus S_t$. Define the belief error vectors
\[
    e^{(t)} := a - \hat{a}^{(t)}, \qquad
    e^{(t)}_{S_t} := a_{S_t} - \hat{a}^{(t)}_{S_t}, \qquad
    e^{(t)}_{\bar{S}_t} := a_{\bar{S}_t} - \hat{a}^{(t)}_{\bar{S}_t},
\]
and the conditional covariance
\[
    \Sigma_{\bar{S}_t \mid S_t}
    := \Sigma_{\bar{S}_t\bar{S}_t}
    - \Sigma_{\bar{S}_t S_t}\Sigma_{S_t S_t}^{-1}\Sigma_{S_t\bar{S}_t}.
\]
Then the one-step expected loss
\[
    \mathcal{L}(S_t, \hat{a}^{(t)})
    := \mathbb{E}\!\left[\left(y^{(t)} - \hat{y}^{(t)}\right)^2\right]
\]
admits the closed form
\[
\boxed{
    \mathcal{L}(S_t, \hat{a}^{(t)})
    =
    \sigma_\varepsilon^2
    +
    a_{\bar{S}_t}^\top \Sigma_{\bar{S}_t \mid S_t}\, a_{\bar{S}_t}
    +
    \Bigl(e^{(t)}_{S_t}
    + \Sigma_{S_t S_t}^{-1}\Sigma_{S_t\bar{S}_t}\,e^{(t)}_{\bar{S}_t}\Bigr)^\top
    \Sigma_{S_t S_t}
    \Bigl(e^{(t)}_{S_t}
    + \Sigma_{S_t S_t}^{-1}\Sigma_{S_t\bar{S}_t}\,e^{(t)}_{\bar{S}_t}\Bigr).
}
\]
\end{proposition}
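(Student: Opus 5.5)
We work with the Gaussian conditional structure. Write $S=S_t$, $\bar S=\bar S_t$, drop the superscript $(t)$, and set $B:=\Sigma_{\bar S S}\Sigma_{SS}^{-1}$. For Gaussian $x$, the imputation oracle is linear: $\hat x_S=x_S$ and $\hat x_{\bar S}=\mathbb E[x_{\bar S}\mid x_S]=Bx_S$. Hence the residual $r:=x_{\bar S}-Bx_S$ is Gaussian with mean zero and covariance $\Sigma_{\bar S\mid S}$. Since $(x_S,r)$ is jointly Gaussian with $\operatorname{Cov}(r,x_S)=\Sigma_{\bar S S}-B\Sigma_{SS}=0$, the residual $r$ is independent of $x_S$. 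This orthogonal decomposition is the only structural fact we need.

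Next we expand the prediction error. Writing $y=a_S^\top x_S+a_{\bar S}^\top x_{\bar S}+\varepsilon$ and $\hat y=\hat a_S^\top x_S+\hat a_{\bar S}^\top Bx_S$, substitute $x_{\bar S}=Bx_S+r$ to get
\[
y-\hat y=\bigl(e_S+B^\top e_{\bar S}\bigr)^\top x_S + a_{\bar S}^\top r + \varepsilon .
\]
Here $B^\top=\Sigma_{SS}^{-1}\Sigma_{S\bar S}$ by symmetry of $\Sigma$, which produces exactly the vector $e_S+\Sigma_{SS}^{-1}\Sigma_{S\bar S}e_{\bar S}$ appearing in the statement. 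The key observation is that the error on unobserved coordinates enters only through the imputed part $Bx_S$. The residual term carries the true coefficient $a_{\bar S}$, not $e_{\bar S}$, because the human puts zero weight on the unpredictable part $r$.

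We then take the expectation of the square. The three summands are mutually uncorrelated: $x_S$ and $r$ are independent with mean zero, and $\varepsilon$ is independent of $x$ with mean zero. So the cross terms vanish and
\[
\mathbb E[(y-\hat y)^2]=v^\top\Sigma_{SS}v + a_{\bar S}^\top\Sigma_{\bar S\mid S}a_{\bar S}+\sigma_\varepsilon^2,
\]
where $v=e_S+\Sigma_{SS}^{-1}\Sigma_{S\bar S}e_{\bar S}$. This is the boxed formula.

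The only step needing care is the decomposition $y-\hat y$, in particular getting the transpose $B^\top$ right. We also handle the degenerate cases: if $S=\emptyset$, we read the first quadratic term as zero and $\Sigma_{\bar S\mid S}=\Sigma$; if $S=[n]$, the residual term is absent. Invertibility of $\Sigma_{SS}$ follows from $\Sigma\succ0$.
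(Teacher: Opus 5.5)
Your proposal is correct and follows the paper's proof essentially step for step. It uses the same decomposition $y-\hat y=\varepsilon+a_{\bar S}^\top r+\bigl(e_S+\Sigma_{SS}^{-1}\Sigma_{S\bar S}e_{\bar S}\bigr)^\top x_S$, the same computations $\operatorname{Cov}(r)=\Sigma_{\bar S\mid S}$ and $\operatorname{Cov}(r,x_S)=0$, and the same vanishing of cross terms; the only cosmetic difference is that you upgrade zero covariance to independence via joint Gaussianity, whereas the paper (and your own expectation step) needs only uncorrelatedness.
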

\begin{proof}[Proof of Proposition~\ref{prop:one-step-loss-corr}]
For notational simplicity, write
$$
    S := S_t,\qquad \bar S := \bar S_t,\qquad
    \hat a := \hat a^{(t)},\qquad e := e^{(t)},\qquad
    x := x^{(t)},\qquad \varepsilon:=\varepsilon^{(t)}.
$$
By the Gaussian conditional-imputation rule for the unselected coordinates, we use
$$
    \mathbb E[x_{\bar S}\mid x_S] = \Sigma_{\bar S S}\Sigma_{SS}^{-1}x_S.
$$
Therefore the prediction after selecting the coordinates in \(S\) is
$$
    \hat y^{(t)}
    =
    \hat a_S^\top x_S
    +
    \hat a_{\bar S}^\top
    \Sigma_{\bar S S}\Sigma_{SS}^{-1}x_S.
$$
On the other hand, the ground truth is
$$
    y^{(t)} = a_S^\top x_S+a_{\bar S}^\top x_{\bar S}+\varepsilon.
$$
Hence the prediction error is
\[
\begin{aligned}
    y^{(t)}-\hat y^{(t)}
    &=
    a_S^\top x_S+a_{\bar S}^\top x_{\bar S}+\varepsilon
    -
    \hat a_S^\top x_S
    -
    \hat a_{\bar S}^\top
    \Sigma_{\bar S S}\Sigma_{SS}^{-1}x_S  \\
    &=
    \varepsilon
    +
    e_S^\top x_S
    +
    a_{\bar S}^\top x_{\bar S}
    -
    \hat a_{\bar S}^\top
    \Sigma_{\bar S S}\Sigma_{SS}^{-1}x_S .
\end{aligned}
\]
Since \(e_{\bar S}=a_{\bar S}-\hat a_{\bar S}\), we may rewrite $-\hat a_{\bar S} = e_{\bar S}-a_{\bar S}.$ Thus
\[
\begin{aligned}
    y^{(t)}-\hat y^{(t)}
    &=
    \varepsilon
    +
    e_S^\top x_S
    +
    a_{\bar S}^\top x_{\bar S}
    +
    (e_{\bar S}-a_{\bar S})^\top
    \Sigma_{\bar S S}\Sigma_{SS}^{-1}x_S \\
    &=
    \varepsilon
    +
    a_{\bar S}^\top
    \Bigl(
        x_{\bar S}
        -
        \Sigma_{\bar S S}\Sigma_{SS}^{-1}x_S
    \Bigr)
    +
    e_S^\top x_S
    +
    e_{\bar S}^\top
    \Sigma_{\bar S S}\Sigma_{SS}^{-1}x_S .
\end{aligned}
\]
Using symmetry of \(\Sigma\), we have
\[
    e_{\bar S}^\top
    \Sigma_{\bar S S}\Sigma_{SS}^{-1}x_S
    =
    \Bigl(
        \Sigma_{SS}^{-1}\Sigma_{S\bar S}e_{\bar S}
    \Bigr)^\top x_S.
\]
Therefore
\[
    y^{(t)}-\hat y^{(t)}
    =
    \varepsilon
    +
    a_{\bar S}^\top r_{\bar S\mid S}
    +
    \Bigl(
        e_S+
        \Sigma_{SS}^{-1}\Sigma_{S\bar S}e_{\bar S}
    \Bigr)^\top x_S,
\]
where $ r_{\bar S\mid S} := x_{\bar S} - \Sigma_{\bar S S}\Sigma_{SS}^{-1}x_S.$

We now compute the covariance of \(r_{\bar S\mid S}\). First,
\[
\begin{aligned}
    \mathbb E[r_{\bar S\mid S}r_{\bar S\mid S}^\top]
    &=
    \mathbb E\Big[
        \bigl(x_{\bar S}
        -
        \Sigma_{\bar S S}\Sigma_{SS}^{-1}x_S\bigr)
        \bigl(x_{\bar S}
        -
        \Sigma_{\bar S S}\Sigma_{SS}^{-1}x_S\bigr)^\top
    \Big] \\
    &=
    \Sigma_{\bar S\bar S}
    -
    \Sigma_{\bar S S}\Sigma_{SS}^{-1}\Sigma_{S\bar S}
    -
    \Sigma_{\bar S S}\Sigma_{SS}^{-1}\Sigma_{S\bar S}
    +
    \Sigma_{\bar S S}\Sigma_{SS}^{-1}\Sigma_{SS}
    \Sigma_{SS}^{-1}\Sigma_{S\bar S} \\
    &=
    \Sigma_{\bar S\bar S}
    -
    \Sigma_{\bar S S}\Sigma_{SS}^{-1}\Sigma_{S\bar S} \\
    &=
    \Sigma_{\bar S\mid S}.
\end{aligned}
\]
Also,
\[
\begin{aligned}
    \mathbb E[r_{\bar S\mid S}x_S^\top]
    &=
    \mathbb E\Big[
        \bigl(
            x_{\bar S}
            -
            \Sigma_{\bar S S}\Sigma_{SS}^{-1}x_S
        \bigr)x_S^\top
    \Big] \\
    &=
    \Sigma_{\bar S S}
    -
    \Sigma_{\bar S S}\Sigma_{SS}^{-1}\Sigma_{SS} \\
    &=
    0.
\end{aligned}
\]
Thus the residual \(r_{\bar S\mid S}\) is uncorrelated with the selected
coordinates \(x_S\).

Let
\[
    b_S
    :=
    e_S+
    \Sigma_{SS}^{-1}\Sigma_{S\bar S}e_{\bar S}.
\]
Then
\[
    y^{(t)}-\hat y^{(t)}
    =
    \varepsilon
    +
    a_{\bar S}^\top r_{\bar S\mid S}
    +
    b_S^\top x_S.
\]
The noise is independent of \(x\) and has mean zero, so all cross terms
involving \(\varepsilon\) have expectation zero, while
\(\mathbb E[\varepsilon^2]=\sigma_\varepsilon^2\).
Therefore
\[
\begin{aligned}
    \mathcal L(S_t,\hat a^{(t)})
    &=
    \mathbb E\!\left[
        \left(
            a_{\bar S}^\top r_{\bar S\mid S}
            +
            \varepsilon
            +
            b_S^\top x_S
        \right)^2
    \right] \\
    &=
    \sigma_\varepsilon^2
    +
    a_{\bar S}^\top
    \mathbb E[r_{\bar S\mid S}r_{\bar S\mid S}^\top]
    a_{\bar S}
    +
    b_S^\top
    \mathbb E[x_Sx_S^\top]
    b_S
    +
    2a_{\bar S}^\top
    \mathbb E[r_{\bar S\mid S}x_S^\top]
    b_S \\
    &=
    \sigma_\varepsilon^2
    +
    a_{\bar S}^\top
    \Sigma_{\bar S\mid S}
    a_{\bar S}
    +
    b_S^\top
    \Sigma_{SS}
    b_S.
\end{aligned}
\]
Substituting the definition of \(b_S\) gives
\[
    \mathcal L(S_t,\hat a^{(t)})
    =
    \sigma_\varepsilon^2
    +
    a_{\bar S_t}^\top \Sigma_{\bar S_t\mid S_t}a_{\bar S_t}
    +
    \Bigl(
        e^{(t)}_{S_t}
        +
        \Sigma_{S_tS_t}^{-1}\Sigma_{S_t\bar S_t}
        e^{(t)}_{\bar S_t}
    \Bigr)^\top
    \Sigma_{S_tS_t}
    \Bigl(
        e^{(t)}_{S_t}
        +
        \Sigma_{S_tS_t}^{-1}\Sigma_{S_t\bar S_t}
        e^{(t)}_{\bar S_t}
    \Bigr),
\]
which is the desired closed form.
\end{proof}

\subsection{Symmetric Two-Feature Example}
\label{app:symmetric-two-feature-proof}

\begin{proof}[Proof of Proposition~\ref{thm:symmetric-two-feature-geometric}]
For a state with current coefficient-error vector $(e_1^{(t)},e_2^{(t)})$, Proposition~\ref{prop:one-step-loss-corr} gives
\[
\mathcal L(\{1\},\hat{a}^{(t)})=\sigma_\varepsilon^2+(1-\rho^2)a^2+(e_1^{(t)}+\rho e_2^{(t)})^2,
\qquad
\mathcal L(\{2\},\hat{a}^{(t)})=\sigma_\varepsilon^2+(1-\rho^2)a^2+(e_2^{(t)}+\rho e_1^{(t)})^2.
\]
Without loss of generality, assume that $m_{1}(t) \geq m_{2}(t)$, and note that the two features are symmetric.
Let
\[
r:=\min\{m_1(t),m_2(t)\},
\qquad
d:=|m_1(t)-m_2(t)|,
\]
and let $V^*(r,d)$ be the optimal infinite-horizon discounted loss from this reduced state. 
In this symmetric instance, let \(E:=a-\hat a_i^{(0)}\), which is the same for both features and equals \(a\) under the proposition's assumption \(\hat a_i^{(0)}=0\).
Since $e_i^{(t)}=E\alpha^{-m_i(t)}$, if the exposure counts are $(m_1(t),m_2(t))=(r+d,r)$ with $d\ge 0$, then
\[
e_1^{(t)}=E\alpha^{-(r+d)},
\qquad
e_2^{(t)}=E\alpha^{-r}.
\]
Let $\lambda:=\alpha^{-2}$, $c:=\sigma_\varepsilon^2+(1-\rho^2)a^2$, and
\[
    \rho^*(\delta):=\sqrt{\frac{1-\delta}{1-\delta/\alpha^2}}.
\]
Hence the one-step losses are
\[
\mathcal L_\mathrm{keep}(r,d)=c+E^2\lambda^r(\alpha^{-d}+\rho)^2,
\qquad
\mathcal L_\mathrm{switch}(r,d)=c+E^2\lambda^r(1+\rho\alpha^{-d})^2,
\]
where ``keep'' means selecting the more-exposed feature (feature 1) and ``switch'' means selecting the less-exposed feature (feature 2).

We next write 
\[
V^*(r,d)=\frac{c}{1-\delta}+E^2\lambda^rF_d^*,
\]
where for every $d\ge 1$,
\begin{equation}
\label{eq:bellman-reduced}
F_d^*=
\min\Big\{
(\alpha^{-d}+\rho)^2+\delta F_{d+1}^*,
\ (1+\rho\alpha^{-d})^2+\delta\lambda F_{d-1}^*
\Big\}.
\end{equation}
The initial condition is
\[
F_0^*=(1+\rho)^2+\delta F_1^*.
\]
Define the Bellman operator $T$ on bounded sequences $F=(F_d)_{d\ge 0}$ and $G=(G_d)_{d\ge 0}$ by the right-hand side of
\eqref{eq:bellman-reduced}. Then
\[
\|TF-TG\|_\infty\le \delta\|F-G\|_\infty
\]
for all bounded $F,G$, because the continuation terms are multiplied by either $\delta$ or $\delta\lambda\le \delta$.
Hence $T$ is a $\delta$-contraction on $\ell_\infty$, so it has a unique fixed point, namely $F^*$.

\medskip
\noindent
\textbf{Case 1: $\rho\le \rho^*(\delta)$.}
Consider the policy that always chooses the more-exposed feature once a gap has opened. Its continuation value from state $d$ is
\[
F_d^{\mathrm{same}}
:=\sum_{s=0}^\infty \delta^s(\rho+\alpha^{-(d+s)})^2.
\]
Expanding the square and summing the resulting geometric series gives
\begin{equation}
\label{eq:Fsame-closed}
F_d^{\mathrm{same}}
=
\frac{\rho^2}{1-\delta}
+\frac{2\rho\alpha^{-d}}{1-\delta/\alpha}
+\frac{\alpha^{-2d}}{1-\delta/\alpha^2}.
\end{equation}
In particular,
\[
F_d^{\mathrm{same}}=(\alpha^{-d}+\rho)^2+\delta F_{d+1}^{\mathrm{same}}
\qquad\text{for every }d\ge 0.
\]
So $F^{\mathrm{same}}$ satisfies the first Bellman branch with equality.

We now compare the two Bellman branches. For $d\ge 1$,
\begin{align}
&\Big[(1+\rho\alpha^{-d})^2+\delta\lambda F_{d-1}^{\mathrm{same}}\Big]
-\Big[(\alpha^{-d}+\rho)^2+\delta F_{d+1}^{\mathrm{same}}\Big] \notag\\
&\qquad=
\frac{\big[(1-\delta/\alpha^2)-\alpha^{-2d}(1-\delta)\big]\big[(1-\rho^2)-\delta(1-\rho^2/\alpha^2)\big]}{(1-\delta)(1-\delta/\alpha^2)}.
\label{eq:same-branch-diff}
\end{align}
Because $0<\alpha^{-1}<1$ and $d\ge 1$,
\[
(1-\delta/\alpha^2)-\alpha^{-2d}(1-\delta)
\ge
(1-\delta/\alpha^2)-\alpha^{-2}(1-\delta)
=1-\alpha^{-2}>0.
\]
Also, $(1-\delta)(1-\delta/\alpha^2)>0$. Therefore the sign of \eqref{eq:same-branch-diff} is determined by
\[
(1-\rho^2)-\delta(1-\rho^2/\alpha^2).
\]
If $\rho\le \rho^*(\delta)$, then this term is nonnegative, so the first Bellman branch is weakly smaller than the second for every $d\ge 1$.
Hence $F^{\mathrm{same}}$ attains the Bellman minimum at every state and is therefore a fixed point of $T$.
By uniqueness of the fixed point, $F^{\mathrm{same}}=F^*$.

Starting from the symmetric state $d=0$, the first move is a tie by symmetry. Once one feature is chosen, the state becomes $d=1$, and the Bellman-optimal action is to keep choosing the more-exposed feature forever. Thus an optimal policy is \((\{1\},\{1\},\{1\},\dots)\) or symmetrically \((\{2\},\{2\},\{2\},\dots)\).

\medskip
\noindent
\textbf{Case 2: $\rho\ge \rho^*(\delta)$.}
Now consider the policy that always chooses the less-exposed feature whenever $d\ge 1$.
Starting from $d=0$, one feature must first be chosen arbitrarily, which moves the system to $d=1$.
Therefore
\[
F_0^{\mathrm{alt}}=(1+\rho)^2+\delta\Big((1+\rho/\alpha)^2+\delta\lambda F_0^{\mathrm{alt}}\Big),
\]
and solving for $F_0^{\mathrm{alt}}$ yields
\begin{equation}
\label{eq:F0alt}
F_0^{\mathrm{alt}}=
\frac{(1+\rho)^2+\delta(1+\rho/\alpha)^2}{1-\delta^2\lambda}.
\end{equation}
For $d\ge 1$, if the policy always chooses the less-exposed feature, then after $d$ such choices the gap closes and the process returns to state $0$. Hence
\begin{equation}
\label{eq:Falt-sum}
F_d^{\mathrm{alt}}
=
(\delta\lambda)^dF_0^{\mathrm{alt}} + \sum_{j=0}^{d-1}(\delta\lambda)^j(1+\rho\alpha^{-(d-j)})^2.
\end{equation}
In particular,
\[
F_d^{\mathrm{alt}}=(1+\rho\alpha^{-d})^2+\delta\lambda F_{d-1}^{\mathrm{alt}}
\qquad\text{for every }d\ge 1,
\]
and also
\[
F_0^{\mathrm{alt}}=(1+\rho)^2+\delta F_1^{\mathrm{alt}}.
\]
So $F^{\mathrm{alt}}$ satisfies the second Bellman branch with equality.

Define
\[
Q_d:=\sum_{j=0}^{d-1}\alpha^{-2j}\sum_{\ell=0}^{j}\delta^\ell>0.
\]
Substituting \eqref{eq:F0alt}--\eqref{eq:Falt-sum} into the Bellman comparison gives, for every $d\ge 1$,
\begin{align}
&\Big[(\alpha^{-d}+\rho)^2+\delta F_{d+1}^{\mathrm{alt}}\Big]
-\Big[(1+\rho\alpha^{-d})^2+\delta\lambda F_{d-1}^{\mathrm{alt}}\Big] \notag\\
&\qquad=
(1-\alpha^{-2})Q_d\Big[\delta(1-\rho^2/\alpha^2)-(1-\rho^2)\Big].
\label{eq:alt-branch-diff}
\end{align}
Since $1-\alpha^{-2}>0$ and $Q_d>0$, the sign of \eqref{eq:alt-branch-diff} is determined by
\[
\delta(1-\rho^2/\alpha^2)-(1-\rho^2).
\]
If $\rho\ge \rho^*(\delta)$, then this term is nonnegative, so the second Bellman branch is weakly smaller than the first for every $d\ge 1$.
Hence $F^{\mathrm{alt}}$ attains the Bellman minimum at every state and is therefore a fixed point of $T$.
By uniqueness of the fixed point, $F^{\mathrm{alt}}=F^*$.

Starting from the symmetric state $d=0$, the first move is again a tie. Thereafter the Bellman-optimal action is always to choose the less-exposed feature, which forces the process to alternate forever. Thus an optimal policy is \((\{1\},\{2\},\{1\},\{2\},\dots)\) or symmetrically \((\{2\},\{1\},\{2\},\{1\},\dots)\).

\medskip
\noindent
\textbf{Case 3: $\rho=\rho^*(\delta)$.}
At $\rho=\rho^*(\delta)$, the comparison terms in both \eqref{eq:same-branch-diff} and \eqref{eq:alt-branch-diff} are zero, so both candidate policies attain the Bellman minimum at every state. Therefore both are optimal.
The value function remains unique because the Bellman operator is still a contraction, but the optimal policy need not be unique.
\end{proof}

\subsection{Negative Results for Stationary Policy}
\label{app:negative-results}

\begin{proof}[Proof of Theorem~\ref{thm:stationary_bad}]
Since $\Sigma$ has a nonzero off-diagonal entry, there exist $i\neq j$ such
that $\Sigma_{ij}\neq 0$. Choose any set
\[
T\subseteq[n]\setminus\{j\},\qquad |T|=k,\qquad i\in T .
\]
This is possible because $k<n$.

For disjoint sets $A,S\subseteq[n]$, write
\[
\Sigma_{A\mid S}
:=
\Sigma_{AA}-\Sigma_{AS}\Sigma_{SS}^{-1}\Sigma_{SA},
\]
with the convention $\Sigma_{A\mid \emptyset}:=\Sigma_{AA}$. Let
$\mathbf 1_A$ denote the all-ones vector indexed by $A$.
Set the signal scale
\[
    \beta_\delta^2
    :=
    \begin{cases}
    1, & \sigma_\varepsilon^2=0,\\[2pt]
    \sigma_\varepsilon^2/(1-\delta), & \sigma_\varepsilon^2>0.
    \end{cases}
\]

We construct the instance as follows. The true coefficients are
\[
a_\ell =
\begin{cases}
\beta_\delta, & \ell\in T,\\
0, & \ell\notin T.
\end{cases}
\]
The initial belief is correct on every coordinate except $j$:
\[
\hat a_\ell^{(0)}=a_\ell\quad\text{for }\ell\neq j,
\qquad
\hat a_j^{(0)}=-\beta_\delta.
\]
Thus the initial coefficient error $e^{(0)}:=a-\hat a^{(0)}$ satisfies
$e_j^{(0)}=\beta_\delta$ and $e_\ell^{(0)}=0$ for every $\ell\neq j$.

Use perfect-after-one-exposure learning:
\[
\phi(0)=1,\qquad \phi(m)=0\quad\text{for all }m\ge 1.
\]
Equivalently, once a feature is shown once, its coefficient error becomes zero
forever.

We first upper bound the optimal dynamic loss. Consider the nonstationary
policy that shows feature $j$ in the first round and then shows $T$ forever:
\[
S_0=\{j\},\qquad S_t=T\quad\text{for all }t\ge 1 .
\]
After the first round, feature $j$'s coefficient error is learned away.
Moreover, the coefficients on $T$ were already correct initially, and all
coefficients outside $T$ are zero. Therefore, from time $t\ge 1$, showing
$T$ incurs only the irreducible noise loss $\sigma_\varepsilon^2$.

By Proposition~\ref{prop:one-step-loss-corr}, the policy-dependent part of
the first-round loss is
\[
B_{\Sigma,k}
:=
\mathbf 1_T^\top \Sigma_{T\mid \{j\}}\mathbf 1_T+\Sigma_{jj}.
\]
Thus
\[
\inf_{\pi\in\Pi} J(\delta,\pi)
\le
\frac{\sigma_\varepsilon^2}{1-\delta}
+
\beta_\delta^2 B_{\Sigma,k}.
\]

We now lower bound every stationary policy. Consider any stationary policy
that always shows some set $S\subseteq[n]$, $|S|\le k$.

First suppose $S=T$. Then feature $j$ is never shown, so its coefficient
error remains equal to \(\beta_\delta\) forever. Since $i\in T$ and $\Sigma_{ij}\neq 0$,
we have $\Sigma_{Tj}\neq 0$. Define
\[
q_{\Sigma,k}
:=
\Sigma_{jT}\Sigma_{TT}^{-1}\Sigma_{Tj}.
\]
Because $\Sigma_{TT}^{-1}$ is positive definite and $\Sigma_{Tj}\neq 0$, we
have $q_{\Sigma,k}>0$. Again by
Proposition~\ref{prop:one-step-loss-corr}, every round under the stationary
policy $T^\infty$ has policy-dependent loss at least
\(\beta_\delta^2 q_{\Sigma,k}\), in addition to the noise
\(\sigma_\varepsilon^2\).

Now suppose $S\neq T$. Since $|S|\le k=|T|$, the set $S$ cannot contain all
of $T$. Hence $T\setminus S\neq\emptyset$. Since the true coefficient vector
is nonzero on $T\setminus S$, the irreducible conditional-variance term in
the policy-dependent part of the one-step loss is at least
\[
\beta_\delta^2
\mathbf 1_{T\setminus S}^\top
\Sigma_{T\setminus S\mid S}
\mathbf 1_{T\setminus S}.
\]
Because $\Sigma\succ 0$, every Schur complement
$\Sigma_{T\setminus S\mid S}$ is positive definite. Therefore this quantity
is strictly positive.

There are only finitely many feasible stationary sets $S$. Define
\[
r_{\Sigma,k}
:=
\min_{\substack{S\subseteq[n],\, |S|\le k\\ S\neq T}}
\mathbf 1_{T\setminus S}^\top
\Sigma_{T\setminus S\mid S}
\mathbf 1_{T\setminus S}.
\]
The preceding argument shows that $r_{\Sigma,k}>0$.

Therefore every stationary policy has policy-dependent per-round loss at least
\[
d_{\Sigma,k}:=\min\{q_{\Sigma,k},r_{\Sigma,k}\}>0
\]
from time $t=1$ onward. Hence every stationary policy satisfies
\[
J(\delta,\pi)
\ge
\frac{\sigma_\varepsilon^2}{1-\delta}
+
\sum_{t=1}^\infty \delta^t \beta_\delta^2 d_{\Sigma,k}
=
\frac{\sigma_\varepsilon^2}{1-\delta}
+
\frac{\delta}{1-\delta}\beta_\delta^2 d_{\Sigma,k}.
\]
Combining the upper and lower bounds, if \(\sigma_\varepsilon^2=0\), then
\[
\frac{\inf_{\pi\in\Pi_{\rm stat}}J(\delta,\pi)}
     {\inf_{\pi\in\Pi}J(\delta,\pi)}
\ge
\frac{\delta}{1-\delta}\cdot
\frac{d_{\Sigma,k}}{B_{\Sigma,k}}.
\]
If \(\sigma_\varepsilon^2>0\), then our choice
\(\beta_\delta^2=\sigma_\varepsilon^2/(1-\delta)\) gives
\[
\frac{\inf_{\pi\in\Pi_{\rm stat}}J(\delta,\pi)}
     {\inf_{\pi\in\Pi}J(\delta,\pi)}
\ge
\frac{1+d_{\Sigma,k}\delta/(1-\delta)}{1+B_{\Sigma,k}}
\ge
\frac{\delta}{1-\delta}\cdot
\frac{d_{\Sigma,k}}{1+B_{\Sigma,k}}.
\]
Thus the theorem holds with
\[
c_{\Sigma,k}:=\frac{d_{\Sigma,k}}{1+B_{\Sigma,k}}>0.
\]
\end{proof}

\subsection{Eventually Stationary Suffix}
\label{app:eventual-static-suffix}

\begin{proof}[Proof of Theorem~\ref{thm:eventually-static-suffix-regular}]
For compactness, write
\[
    g(\mathbf m,S):=\mathcal L(S,\hat a(\mathbf m)),
    \qquad
    \theta^{(t)}:=\hat a(\mathbf m^{(t)}),
\]
Also write \(N:=\bar{R}=[n]\setminus R\) and \(\mathbf c=(c_i)_{i\in N}\), and let \(L_\ell\) be the Lipschitz constant from
Assumption~\ref{ass:regular-loss}. Since every feature in \(N\) is selected
only finitely many times, there exists \(T_R<\infty\) such that
\(S_s\subseteq R\) for all \(s\ge T_R\).

By coordinate-wise convergence, \(m_i^{(t)}\to\infty\) implies
\(\hat a_i(m_i^{(t)})\to a_i\) for every \(i\in R\). By definition of \(N\),
the coordinates in \(N\) freeze to their final counts \(c_i\) after time
\(T_R\). Hence
\[
    \theta^{(t)}\to \bar a(\mathbf a,\mathbf c).
\]
Define \(\bar{\mathcal L}_{\pi^\star}(S):=\mathcal L(S,\bar a(\mathbf a,\mathbf c))\).
Therefore, by the continuity part of
Assumption~\ref{ass:regular-loss}, for every feasible \(S\subseteq R\),
\[
    g(\mathbf m^{(t)},S)
    =
    \mathcal L(S,\theta^{(t)})
    \to
    \mathcal L(S,\bar a(\mathbf a,\mathbf c))
    =
    \bar{\mathcal L}_{\pi^\star}(S).
\]
The feasible action set \(\{S\subseteq R: |S|\le k\}\) is finite, so this
convergence is uniform over tail actions. Define
\[
    \varepsilon_t
    :=
    \max_{\substack{S\subseteq R\\ |S|\le k}}
    \left|g(\mathbf m^{(t)},S)-\bar{\mathcal L}_{\pi^\star}(S)\right|.
\]
Then \(\varepsilon_t\to0\).

We also define the maximum one-step learning increment remaining after time
\(t\):
\[
    \eta_t
    :=
    \max_{i\in R}
    \sup_{q\ge m_i^{(t)}}
    \left|\hat a_i(q+1)-\hat a_i(q)\right|,
\]
with the convention \(\eta_t=0\) if \(R=\emptyset\). Since each coordinate
learning sequence is convergent, its adjacent increments vanish, and hence
\(\eta_t\to0\).

Let
\[
    \mathcal M^\star
    :=
    \arg\min_{\substack{S\subseteq R\\ |S|\le k}}
    \bar{\mathcal L}_{\pi^\star}(S)
\]
be the set of limiting tail minimizers. If every feasible subset of \(R\)
belongs to \(\mathcal M^\star\), then the first claim is immediate after time
\(T_R\). Thus assume that at least one feasible tail action is not a minimizer,
and define
\[
    \gamma
    :=
    \min_{\substack{S\subseteq R,\ |S|\le k\\ S\notin\mathcal M^\star}}
    \left(
        \bar{\mathcal L}_{\pi^\star}(S)
        -
        \min_{\substack{T\subseteq R\\ |T|\le k}}
        \bar{\mathcal L}_{\pi^\star}(T)
    \right)
    >0,
\]
which is positive because the feasible tail-action set is finite.

Fix \(t\ge T_R\) and suppose, toward contradiction, that
\(S_t\notin\mathcal M^\star\). Choose any \(S^\star\in\mathcal M^\star\).
Construct an alternative policy \(\tilde\pi\) that agrees with \(\pi^\star\)
at every time except \(t\), selects \(S^\star\) at time \(t\) instead of
\(S_t\), and then follows the same future actions \(S_{t+1},S_{t+2},\ldots\)
as \(\pi^\star\). Let \(\tilde{\mathbf m}^{(s)}\) be the corresponding count
trajectory.

\paragraph{Current-round gain.}
By the definition of \(\varepsilon_t\),
\[
    g(\mathbf m^{(t)},S_t)-g(\mathbf m^{(t)},S^\star)
    \ge
    \bar{\mathcal L}_{\pi^\star}(S_t)
    -
    \bar{\mathcal L}_{\pi^\star}(S^\star)
    -2\varepsilon_t
    \ge
    \gamma-2\varepsilon_t.
\]

\paragraph{Future-round penalty.}
For every \(s>t\), the two count vectors \(\mathbf m^{(s)}\) and
\(\tilde{\mathbf m}^{(s)}\) differ only on coordinates in
\(S_t\triangle S^\star\). On each such coordinate, the two counts differ by
exactly one, and the smaller count is at least \(m_i^{(t)}\). Therefore
\[
    \|\hat a(\tilde{\mathbf m}^{(s)})-\hat a(\mathbf m^{(s)})\|_1
    \le
    |S_t\triangle S^\star|\eta_t
    \le
    2k\eta_t.
\]
Using Assumption~\ref{ass:regular-loss},
\[
\begin{aligned}
    \sum_{s=t+1}^{\infty}\delta^s
    \left|
        g(\tilde{\mathbf m}^{(s)},S_s)
        -
        g(\mathbf m^{(s)},S_s)
    \right|
    &\le
    \sum_{s=t+1}^{\infty}\delta^s L_\ell
    \|\hat a(\tilde{\mathbf m}^{(s)})-\hat a(\mathbf m^{(s)})\|_1\\
    &\le
    \delta^t\frac{2k\delta}{1-\delta}L_\ell\eta_t.
\end{aligned}
\]

\paragraph{Contradiction.}
Combining the current-round improvement with the future-round penalty gives
\[
    J(\delta,\tilde\pi)-J(\delta,\pi^\star)
    \le
    -\delta^t(\gamma-2\varepsilon_t)
    +
    \delta^t\frac{2k\delta}{1-\delta}L_\ell\eta_t.
\]
Because \(\varepsilon_t\to0\) and \(\eta_t\to0\), there exists \(T_d\ge T_R\)
such that
\[
    2\varepsilon_t+\frac{2k\delta}{1-\delta}L_\ell\eta_t<\gamma
\]
for all \(t\ge T_d\). For such \(t\), the alternative policy has strictly
smaller objective value than \(\pi^\star\), contradicting the optimality of
\(\pi^\star\). Hence \(S_t\in\mathcal M^\star\) for all \(t\ge T_d\).

If \(\bar{\mathcal L}_{\pi^\star}\) has a unique minimizer \(S^\star\), then
\(\mathcal M^\star=\{S^\star\}\), and the preceding conclusion gives
\(S_t=S^\star\) for all \(t\ge T_d\).
\end{proof}

\begin{corollary}[Two-feature asymmetry rules out perpetual alternation]
\label{cor:two-feature-general-phi-static}
In the Gaussian squared-loss model, fix \(n=2\), \(k=1\), and
\[
    \Sigma=
    \begin{pmatrix}
        1 & \rho\\
        \rho & 1
    \end{pmatrix},
    \qquad |\rho|<1.
\]
If the learning rule is convergent and \(|a_1|\ne |a_2|\), then every infinite-horizon optimal policy has a stationary suffix.
\end{corollary}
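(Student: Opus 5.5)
The plan is to apply Theorem~\ref{thm:eventually-static-suffix-regular} and show that, for $n=2,k=1$ with $|a_1|\ne|a_2|$, the limiting tail loss $\bar{\mathcal L}_{\pi^\star}(S)=\mathcal L(S,\bar a(\mathbf a,\mathbf c))$ has a unique minimizer over feasible $S\subseteq R$. The uniqueness clause of that theorem then gives $S_t=S^\star$ for all $t\ge T_d$, i.e.\ a stationary suffix. Convergence of the learning rule is exactly what Theorem~\ref{thm:eventually-static-suffix-regular} needs: the coordinates shown infinitely often converge to $a_i$. Continuity and uniform Lipschitzness of the Gaussian squared loss in $\hat{\mathbf a}$ must also be checked. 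By Proposition~\ref{prop:one-step-loss-corr} the loss is a quadratic in $\hat{\mathbf a}$. I expect it to be Lipschitz on the bounded set of reachable estimates, which lie between $\hat a^{(0)}$ and $a$ coordinatewise under Equation~\ref{eq:count-based-learning}. This local version is all the proof of the theorem actually uses, so I would note that the theorem applies verbatim.

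I would split on $R$. If $|R|=1$, say $R=\{i\}$, then for large $t$ the only nonempty feasible tail action is $\{i\}$, so $S_t=\{i\}$ eventually. I would also handle, or rule out, $S_t=\emptyset$. Showing nothing has loss $\sigma_\varepsilon^2+\hat{\mathbf a}$-dependent terms. Taking "at most $k$" literally, I would compare $\emptyset$ against $\{i\}$ in the limit. With correct coefficients, showing a feature strictly reduces the variance term as long as $a\neq 0$, so $\emptyset$ is never a limiting minimizer unless degenerate. Either way the candidate set is a singleton and the suffix is stationary. The case $R=\emptyset$ is impossible when $k=1$ and every round shows a test (or it trivially gives the constant action $\emptyset$). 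This leaves the main case $R=\{1,2\}$, where $\bar a=\mathbf a$ exactly.

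In the case $R=\{1,2\}$, with $e=0$ the formula of Proposition~\ref{prop:one-step-loss-corr} gives
\[
\bar{\mathcal L}(\{1\})=\sigma_\varepsilon^2+(1-\rho^2)a_2^2,\qquad \bar{\mathcal L}(\{2\})=\sigma_\varepsilon^2+(1-\rho^2)a_1^2 .
\]
Since $|\rho|<1$, these are equal iff $a_1^2=a_2^2$, which the hypothesis excludes. So exactly one singleton is the strict limiting minimizer. The empty set gives $\sigma_\varepsilon^2+\mathbf a^\top\Sigma\mathbf a$, which is strictly larger than the best singleton because $\Sigma\succ 0$ and some $a_i\neq 0$. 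Hence the minimizer is unique and the theorem yields a stationary suffix. Note that this stationary suffix shows only one feature, contradicting $R=\{1,2\}$. That is fine: it shows an optimal policy never shows both features infinitely often, which is the ``rules out perpetual alternation'' content.

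The main obstacle is the bookkeeping around Assumption~\ref{ass:regular-loss}. The Gaussian quadratic loss is not globally Lipschitz, so I would argue that the proof of Theorem~\ref{thm:eventually-static-suffix-regular} only uses Lipschitzness along trajectories confined to a compact box. On that box a Lipschitz constant exists because the quadratic's gradient is bounded. A secondary subtlety is the degenerate case where $a_i=0$ for the only feature in $R$, where the empty set could tie. I would handle this by assuming tests are always run, i.e.\ $|S_t|=k$, as in the two-feature setup of Section~\ref{sec:illustrative-example}.
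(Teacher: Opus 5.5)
Your proposal is correct and follows the paper's proof. The shared steps are: verify the theorem's hypotheses (the paper does this with Lemma~\ref{lem:uniform-lipschitz-loss} under bounded estimates, which is your compact-box observation), note that perpetual switching forces $R=\{1,2\}$, compute the tail losses $\sigma_\varepsilon^2+(1-\rho^2)a_2^2$ and $\sigma_\varepsilon^2+(1-\rho^2)a_1^2$, use $|a_1|\ne|a_2|$ for distinctness, and invoke the uniqueness clause. Your extra treatment of $S_t=\emptyset$ goes beyond the paper, which implicitly takes $|S_t|=1$. For $R=\{i\}$, however, the comparison is not made with correct coefficients: the frozen error enters, giving $\mathcal L(\emptyset)-\bar{\mathcal L}(\{i\})=(a_i+\rho a_j)^2-\rho^2e_j^2$, which can vanish. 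So it is your fallback $|S_t|=k$, the paper's implicit convention, that actually closes that case.
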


\begin{proof}[Proof of Corollary~\ref{cor:two-feature-general-phi-static}]
The Gaussian squared-loss model satisfies the hypotheses of
Theorem~\ref{thm:eventually-static-suffix-regular}: Proposition~\ref{prop:asymptotic-tail-loss}
gives the limiting tail losses, and Lemma~\ref{lem:uniform-lipschitz-loss}
verifies the Lipschitz regularity needed for the Gaussian squared-loss risk.
For \(n=2\), \(k=1\), a non-stationary policy would need both features to be
selected infinitely often. In that case \(R=\{1,2\}\) and \(N=\emptyset\). By
Proposition~\ref{prop:asymptotic-tail-loss},
\[
    \bar g_{\pi^\star}(\{1\})
    =
    \operatorname{Var}(y\mid x_1),
    \qquad
    \bar g_{\pi^\star}(\{2\})
    =
    \operatorname{Var}(y\mid x_2).
\]
Using
\[
    \Sigma=
    \begin{pmatrix}
        1 & \rho\\
        \rho & 1
    \end{pmatrix},
    \qquad
    y=a_1x_1+a_2x_2+\varepsilon,
\]
we obtain
\[
    \operatorname{Var}(y\mid x_1)=\sigma_\varepsilon^2+(1-\rho^2)a_2^2,
    \qquad
    \operatorname{Var}(y\mid x_2)=\sigma_\varepsilon^2+(1-\rho^2)a_1^2.
\]
Therefore
\[
    \bar g_{\pi^\star}(\{1\})-\bar g_{\pi^\star}(\{2\})
    =
    (1-\rho^2)(a_2^2-a_1^2).
\]
Since \(|\rho|<1\) and \(|a_1|\ne |a_2|\), the two tail losses are distinct.
Hence, whenever both features are recurrent, \(\bar g_{\pi^\star}\) has a
unique minimizer. Theorem~\ref{thm:eventually-static-suffix-regular} then rules out
perpetual alternation. If only one feature is recurrent, the policy is already
eventually stationary. Hence every optimal policy has a stationary suffix.
\end{proof}

\subsection{Tail Loss and Generic Uniqueness}
\label{app:tail-loss}

\begin{lemma}[Uniform Lipschitz continuity of the one-step loss]
\label{lem:uniform-lipschitz-loss}
In the Gaussian squared-loss model, suppose \(\mathcal X=\mathcal N(0,\Sigma)\) with \(\Sigma\succ0\).
Assume there is a finite \(B\) such that \(|a_i|\le B\) and \(|\hat{a}_i(r)|\le B\) for every $i\in[n]$ and
every $r\in\mathbb{N}_0$. 
There exists some $L_{\Sigma,B}$ such that for every pair of count states $\mathbf{m},\mathbf{m}' \in \mathbb{N}_0^{n}$ and every action $S$
with $|S|\le k$, we have that
\[
    |\mathcal L(S,\hat a(\mathbf{m})) - \mathcal L(S,\hat a(\mathbf{m}'))|
    \le
    L_{\Sigma,B}\,\|\hat{a}(\mathbf{m}) - \hat{a}(\mathbf{m}')\|_1.
\]
\end{lemma}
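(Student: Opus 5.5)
The plan is to work directly from the closed form in Proposition~\ref{prop:one-step-loss-corr}. For a fixed action $S$ with $|S|\le k$, the first two terms $\sigma_\varepsilon^2+a_{\bar S}^\top\Sigma_{\bar S\mid S}a_{\bar S}$ do not involve $\hat a$. The dependence on the coefficient estimate therefore enters only through the quadratic form $b_S(e)^\top\Sigma_{SS}b_S(e)$, where $e=a-\hat a$ and
\[
b_S(e)=e_S+\Sigma_{SS}^{-1}\Sigma_{S\bar S}e_{\bar S}=M_S e,\qquad M_S:=\bigl[\,I_S\;\;\Sigma_{SS}^{-1}\Sigma_{S\bar S}\,\bigr].
\]
Here $M_S$ is a fixed linear map determined by $\Sigma$ and $S$. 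We then have $\mathcal L(S,\hat a)=C_S+e^\top P_S e$ with $P_S:=M_S^\top\Sigma_{SS}M_S\succeq0$. For $S=\emptyset$ the quadratic term vanishes and the bound is trivial.

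The key step is the standard bound on the difference of quadratic forms. With $e=a-\hat a(\mathbf m)$ and $e'=a-\hat a(\mathbf m')$,
\[
|e^\top P_Se-e'^\top P_Se'|=|(e-e')^\top P_S(e+e')|\le \|P_S\|_{\infty\to1}\ldots
\]
More concretely, we bound it by $\|e-e'\|_1\,\|P_S(e+e')\|_\infty\le \|e-e'\|_1\,\|P_S\|_{\max\text{-row}}\,\|e+e'\|_1$. Note that $e-e'=\hat a(\mathbf m')-\hat a(\mathbf m)$. The boundedness hypothesis gives $|e_i|,|e'_i|\le 2B$, so $\|e+e'\|_1\le 4nB$. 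Let $\|P_S\|_{\max}$ denote the largest absolute entry of $P_S$. Then
\[
|\mathcal L(S,\hat a(\mathbf m))-\mathcal L(S,\hat a(\mathbf m'))|\le 4nB\,\|P_S\|_{\max}\,\|\hat a(\mathbf m)-\hat a(\mathbf m')\|_1 .
\]

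Finally, we take $L_{\Sigma,B}:=4nB\max_{|S|\le k}\|P_S\|_{\max}$. This maximum is over finitely many sets, and each $P_S$ is finite because $\Sigma\succ0$ makes every principal submatrix $\Sigma_{SS}$ invertible. The constant depends only on $\Sigma$, $B$, $n$, and $k$, and not on the count states, which gives uniformity.

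The main obstacle is the role of the boundedness assumption. A quadratic loss is only locally Lipschitz, so uniformity over all count states genuinely needs $|\hat a_i(r)|\le B$. Under the count-based trajectory \eqref{eq:count-based-learning} this holds automatically with $B=\max_i(|a_i|+|a_i-\hat a_i^{(0)}|)$, since $\sqrt{\phi}\in[0,1]$. I would note this remark so that the lemma visibly supplies Assumption~\ref{ass:regular-loss} for the Gaussian squared-loss model. Continuity in $\hat a$ is immediate from the polynomial form.
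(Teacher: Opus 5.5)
Your proposal is correct and is essentially the paper's proof: you use the same reduction via Proposition~\ref{prop:one-step-loss-corr} to a constant plus $e^\top P_S e$, the same factorization $(e-e')^\top P_S(e+e')$ followed by a H\"older-type bound, and the same maximum over the finitely many feasible $S$. The only difference is the norm pairing: the paper bounds by the max-row-sum norm $\|P_S\|_\infty$ times $\|e+e'\|_\infty\le 4B$, whereas you use the largest entry times $\|e+e'\|_1\le 4nB$, which only enlarges the constant, by at most a factor of $n$ (and you should delete the unfinished $\|P_S\|_{\infty\to1}$ fragment).
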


\begin{proof}
Write $g(\mathbf m,S):=\mathcal L(S,\hat a(\mathbf m))$.
For each action $S\subseteq[n]$ with $|S|\le k$, define
$\bar{S}:=[n]\setminus S$ and
\[
    z_S(e^{(t)}) := e^{(t)}_S + \Sigma_{SS}^{-1}\Sigma_{S\bar{S}}\,e^{(t)}_{\bar{S}}.
\]
Here, $e_S^{(t)} = (a_{i}-\hat{a}_{i}^{(t)})_{i \in S}.$
Because $z_{S}^{(t)}$ is linear in $\mathbf{e}^{(t)}$, there exists a matrix $B_S$ such that $z_S(e^{(t)}) = B_S e^{(t)}.$
From Proposition~\ref{prop:one-step-loss-corr}, we have that 
\[
    g(\mathbf{m}, S)
    =
    \sigma_\varepsilon^2
    +
    a_{\bar{S}}^\top \Sigma_{\bar{S}\mid S} a_{\bar{S}}
    + e(\mathbf{m})^\top M_S\, e(\mathbf{m}),
    \qquad
    M_S := B_S^\top \Sigma_{SS} B_S.
\]
Since \(|a_i|\le B\) and \(|\hat{a}_i(\cdot)|\le B\), every coordinate of
\(e(\mathbf{m})\) and \(e(\mathbf{m}')\) lies in \([-2B,2B]\), so
\(\|e(\mathbf{m})+e(\mathbf{m'})\|_\infty \le 4B\). Therefore

\begin{align*}
    |g(\mathbf{m},S) - g(\mathbf{m}',S)|
    &= \left|e(\mathbf{m})^\top M_S e(\mathbf{m}) - {e(\mathbf{m}')}^\top M_S e(\mathbf{m}')\right| \\
    &= \left|(e(\mathbf{m})-e(\mathbf{m}'))^\top M_S (e(\mathbf{m})+e(\mathbf{m}'))\right| \\
    &\leq \|e(\mathbf{m})-e(\mathbf{m}')\|_1\,\|M_S(e(\mathbf{m})+e(\mathbf{m}'))\|_\infty \\
    &\leq \|e(\mathbf{m})-e(\mathbf{m}')\|_1\,\|M_S\|_\infty\,\|e(\mathbf{m})+e(\mathbf{m}')\|_\infty \\
    &\leq 4B\|M_S\|_\infty\|e(\mathbf{m})-e(\mathbf{m}')\|_1.
\end{align*}
Since $e(\mathbf{m}) - e(\mathbf{m}') = -(\hat{a}(\mathbf{m}) - \hat{a}(\mathbf{m}'))$, and letting $L_{\Sigma,B}:=4B\max_{\substack{S\subseteq[n]\\ |S|\le k}}\|M_S\|_\infty$, we get that
\[
|g(\mathbf{m},S)-g(\mathbf{m}',S)| \leq L_{\Sigma,B} \lVert \hat a(\mathbf{m})-\hat a(\mathbf{m}') \rVert_{1}.
\]
\end{proof}

\begin{definition}[Recurrent set and frozen tail counts]
\label{def:recurrent-set}
Fix a feasible policy $\pi=(S_t)_{t\ge0}$, and let $\mathbf m^{(t)}$ be its induced
count-state trajectory. Define
\[
    R(\pi) := \{i\in[n] : i\in S_t \text{ for infinitely many } t\},
    \qquad
    N(\pi) := [n]\setminus R(\pi).
\]
Then every $j\in N(\pi)$ is selected only finitely many times, so there exist
$T_R\in\mathbb{N}$ and a vector $c_N^\pi\in\mathbb{N}^{N(\pi)}$ such that
\[
    \bigl(\mathbf m^{(t)}\bigr)_{N(\pi)} = c_N^\pi
    \qquad\text{for all }t\ge T_R.
\]
We refer to $R(\pi)$ as the recurrent set and to $c_N^\pi$ as the frozen
tail counts.
\end{definition}

\begin{proposition}[Asymptotic tail loss along a fixed policy]
\label{prop:asymptotic-tail-loss}
In the Gaussian squared-loss model, suppose \(\mathcal X=\mathcal N(0,\Sigma)\) with \(\Sigma\succ0\).
Fix a feasible policy $\pi=(S_t)_{t\ge0}$, let $R:=R(\pi)$, let
$N:=[n]\setminus R$, and let $c_N:=c_N^\pi$ be as in
Definition~\ref{def:recurrent-set}. For every subset $S\subseteq R$ with
$|S|\le k$, the limit
\[
    \bar{g}_\pi(S) := \lim_{t\to\infty} g(\mathbf m^{(t)}, S)
\]
exists and is given by
\[
    \bar{g}_\pi(S)
    =
    \sigma_\varepsilon^2
    +
    a_{\bar{S}}^\top \Sigma_{\bar{S}\mid S} a_{\bar{S}}
    +
    e_N(c_N)^\top \Sigma_{NS}\Sigma_{SS}^{-1}\Sigma_{SN} e_N(c_N),
    \qquad
    \bar{S} := [n]\setminus S,
\]
where
\[
    e_N(c_N) := \bigl(a_j - \hat{a}_j(c_j)\bigr)_{j\in N}.
\]
Moreover, if
\[
    \varepsilon_t
    :=
    \max_{\substack{S\subseteq R\\ |S|\le k}}
    \bigl|g(\mathbf m^{(t)}, S) - \bar{g}_\pi(S)\bigr|,
\]
then $\varepsilon_t\to0$.
\end{proposition}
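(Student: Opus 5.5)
Starting from Proposition~\ref{prop:one-step-loss-corr}, for any feasible $S\subseteq R$ we have
\[
g(\mathbf m^{(t)},S)=\sigma_\varepsilon^2+a_{\bar S}^\top\Sigma_{\bar S\mid S}a_{\bar S}+z_S(e^{(t)})^\top\Sigma_{SS}\,z_S(e^{(t)}),
\qquad z_S(e)=e_S+\Sigma_{SS}^{-1}\Sigma_{S\bar S}e_{\bar S}.
\]
The first two terms do not depend on $t$. The whole argument therefore reduces to finding the limit of the error vector $e^{(t)}=a-\hat a(\mathbf m^{(t)})$. We then substitute that limit into the continuous quadratic form $e\mapsto z_S(e)^\top\Sigma_{SS}z_S(e)$.

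\textbf{Limit of the error vector.} Take $i\in R$. Since $i$ is selected infinitely often, $m_i^{(t)}\to\infty$. By Equation~\ref{eq:count-based-learning}, $e_i^{(t)}=(a_i-\hat a_i^{(0)})\sqrt{\phi(m_i^{(t)})}\to0$, because $\phi(m)\to0$ and $\phi$ is non-increasing. Now take $j\in N$. By Definition~\ref{def:recurrent-set}, $m_j^{(t)}=c_j$ for all $t\ge T_R$, so $e_j^{(t)}=e_j(c_j)$ from that time on. Together these give $e^{(t)}\to\bar e$, where $\bar e_R=0$ and $\bar e_N=e_N(c_N)$.

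\textbf{Evaluating the quadratic term.} Since $S\subseteq R$, we have $\bar e_S=0$. Split $\bar S=(R\setminus S)\cup N$; the block on $R\setminus S$ also vanishes. Hence
\[
z_S(\bar e)=\Sigma_{SS}^{-1}\Sigma_{S\bar S}\bar e_{\bar S}=\Sigma_{SS}^{-1}\Sigma_{SN}e_N(c_N).
\]
Substituting,
\[
z_S(\bar e)^\top\Sigma_{SS}z_S(\bar e)=e_N(c_N)^\top\Sigma_{NS}\Sigma_{SS}^{-1}\Sigma_{SN}e_N(c_N),
\]
using that $\Sigma_{SS}$ is symmetric and $\Sigma_{NS}=\Sigma_{SN}^\top$. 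By continuity of the quadratic form, $g(\mathbf m^{(t)},S)$ converges to the claimed expression for $\bar g_\pi(S)$. If $S=\emptyset$, we use the conventions $\Sigma_{\bar S\mid\emptyset}=\Sigma$ and that the quadratic term is zero, which matches the formula.

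\textbf{Uniform convergence.} The family $\{S\subseteq R:|S|\le k\}$ is finite, and each term converges. A maximum of finitely many sequences tending to zero also tends to zero, so $\varepsilon_t\to0$.

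For an explicit rate one could instead invoke Lemma~\ref{lem:uniform-lipschitz-loss}, which would need the boundedness $|\hat a_i(r)|\le\max(|a_i|,|\hat a_i^{(0)}|)$; it holds because $\sqrt\phi\in[0,1]$. That route is not required. The only real obstacle is the block bookkeeping in the quadratic term: one has to check that $e_{R\setminus S}$, which lies inside $\bar S$, vanishes in the limit, so that only the $N$-block survives.
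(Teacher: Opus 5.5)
Your proposal is correct and follows essentially the same route as the paper. Both arguments apply the closed form from Proposition~\ref{prop:one-step-loss-corr}, note that $e^{(t)}_R\to 0$ while $e^{(t)}_N$ freezes at $e_N(c_N)$, split $\bar S=(R\setminus S)\cup N$ so that only the $N$-block survives in the quadratic term, and get uniform convergence from finiteness of the action set; your explicit check of the $S=\emptyset$ convention is a small detail the paper leaves implicit.
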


\begin{proof}
Fix $S\subseteq R$ with $|S|\le k$. Since every $i\in R$ is selected infinitely
often under $\pi$, we have $(\mathbf m^{(t)})_i\to\infty$, and hence
\[
    \hat{a}_i\bigl((\mathbf m^{(t)})_i\bigr)\to a_i
    \qquad\text{for every }i\in R.
\]
Writing $e^{(t)} := e(\mathbf m^{(t)}) = a - \hat{a}(\mathbf m^{(t)})$, we have
\[
    e^{(t)}_i \to 0
    \qquad\text{for every }i\in R.
\]
On the other hand, by definition of $N$, the count vector on $N$ freezes:
there exists $c_N$ such that $(\mathbf m^{(t)})_N = c_N$ for all large enough $t$.
Hence
\[
    e^{(t)}_N = e_N(c_N)
    \qquad\text{for all large enough }t.
\]
Apply Proposition~\ref{prop:one-step-loss-corr} with state $\mathbf m^{(t)}$ and
action $S$:
\[
\begin{aligned}
    g(\mathbf m^{(t)}, S)
    =\,&
    \sigma_\varepsilon^2\\
    &+
    a_{\bar{S}}^\top \Sigma_{\bar{S}\mid S} a_{\bar{S}}\\
    &+
    \Bigl(
    e^{(t)}_S + \Sigma_{SS}^{-1}\Sigma_{S\bar{S}}\,e^{(t)}_{\bar{S}}
    \Bigr)^\top
    \Sigma_{SS}
    \Bigl(
    e^{(t)}_S + \Sigma_{SS}^{-1}\Sigma_{S\bar{S}}\,e^{(t)}_{\bar{S}}
    \Bigr).
\end{aligned}
\]
Decompose $\bar{S} = (R\setminus S)\cup N$. Since
\[
    e^{(t)}_S \to 0,
    \qquad
    e^{(t)}_{R\setminus S} \to 0,
    \qquad
    e^{(t)}_N = e_N(c_N)
\]
for all large enough $t$, the quadratic term converges to
\[
    \bigl(\Sigma_{SS}^{-1}\Sigma_{SN} e_N(c_N)\bigr)^\top
    \Sigma_{SS}
    \bigl(\Sigma_{SS}^{-1}\Sigma_{SN} e_N(c_N)\bigr)
    =
    e_N(c_N)^\top \Sigma_{NS}\Sigma_{SS}^{-1}\Sigma_{SN} e_N(c_N).
\]
This proves the displayed formula for $\bar{g}_\pi(S)$.

Since the action set $\{S\subseteq R : |S|\le k\}$ is finite, pointwise
convergence in $S$ is automatically uniform over that finite set. Hence
\[
    \varepsilon_t
    =
    \max_{\substack{S\subseteq R\\ |S|\le k}}
    \bigl|g(\mathbf m^{(t)}, S) - \bar{g}_\pi(S)\bigr|
    \to 0.
\]
\end{proof}

\begin{corollary}
\label{cor:generic-eventual-stationarity-regular}
Let $\mathcal{X} = \mathcal{N}(0,\Sigma)$. 
Assume that $\mathcal{L}$ satisfies Assumption~\ref{ass:regular-loss} and the limiting tail losses are analytically nondegenerate: for every recurrent set \(R\subseteq[n]\), writing \(N=[n]\setminus R\), every frozen count vector \(c_N\in\mathbb N_0^{|N|}\), and every two distinct feasible tail actions \(S,T\subseteq R\), the function
\[
    \mathbf a
    \mapsto
    \mathcal L(S,\bar{a}(\mathbf{a},c_N))
    -
    \mathcal L(T,\bar{a}(\mathbf{a},c_N))
\]
is real analytic and not identically zero.

If \(a\in\mathbb R^n\) is drawn from an absolutely continuous distribution,
then with probability one every infinite-horizon optimal policy is
eventually stationary: for each optimal policy
$\pi^\star$, there exists a \(T_d<\infty\) and
\(S^\star\subseteq[n]\), \(|S^\star|\le k\), such that
\[
    S_t=S^\star
    \qquad
    \text{for all }t\ge T_d .
\]
\end{corollary}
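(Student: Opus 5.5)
The plan is to reduce Corollary~\ref{cor:generic-eventual-stationarity-regular} to the uniqueness clause of Theorem~\ref{thm:eventually-static-suffix-regular}. We show that, outside a Lebesgue-null set of coefficient vectors $\mathbf a$, every possible limiting tail loss $S\mapsto\mathcal L(S,\bar a(\mathbf a,c_N))$ has a unique minimizer over feasible $S\subseteq R$. This must hold simultaneously for every recurrent set $R$ and every frozen count vector $c_N$. Because the exceptional set will not depend on the policy, the conclusion then covers every optimal policy at once. This matters because we do not know in advance which $R$ or $\mathbf c$ a given optimal policy induces.

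\textbf{Step 1: the bad set is null.} For a fixed triple $(R,c_N,\{S,T\})$ with $S\neq T$ feasible subsets of $R$, define
\[
Z_{R,c_N,S,T}:=\bigl\{\mathbf a:\ \mathcal L(S,\bar a(\mathbf a,c_N))=\mathcal L(T,\bar a(\mathbf a,c_N))\bigr\}.
\]
By hypothesis this is the zero set of a real-analytic function on $\mathbb R^n$ that is not identically zero. We invoke the standard fact that such a zero set has Lebesgue measure zero. One proof is by induction on dimension using Fubini: for almost every slice in the first $n-1$ coordinates, the restriction to the last coordinate is a nonzero analytic function of one variable, so its zeros are isolated and hence countable. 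Next, there are finitely many $R\subseteq[n]$ and finitely many action pairs, and $c_N$ ranges over the countable set $\mathbb N_0^{|N|}$. So the union $Z$ of all the $Z_{R,c_N,S,T}$ is a countable union of null sets and is therefore null. Since $\mathbf a$ is drawn from an absolutely continuous distribution, $\Pr(\mathbf a\in Z)=0$.

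\textbf{Step 2: apply the theorem.} Fix $\mathbf a\notin Z$ and an optimal policy $\pi^\star$. Let $R$ and $\mathbf c$ be its recurrent set and frozen counts, as in Definition~\ref{def:recurrent-set}. Since $\mathbf a\notin Z$, no two distinct feasible $S,T\subseteq R$ tie under $\mathcal L(\cdot,\bar a(\mathbf a,\mathbf c))$, so the minimizer $S^\star$ over $\{S\subseteq R:|S|\le k\}$ is unique. This set is nonempty because $\emptyset$ is feasible. Assumption~\ref{ass:regular-loss} holds by hypothesis. The uniqueness clause of Theorem~\ref{thm:eventually-static-suffix-regular} then gives $T_d<\infty$ with $S_t=S^\star$ for all $t\ge T_d$, and this is the claim.

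\textbf{Main obstacle.} The substantive step is the measure-zero statement for analytic zero sets in Step 1. The rest is bookkeeping, and the key point there is that the countability of the family of $(R,c_N)$ lets the exceptional set be chosen independently of the policy. One subtlety needs a check: the theorem's minimization over $S\subseteq R$ treats $\bar a$ as depending on $\mathbf a$ through $a_R$ and through $\hat a_i(c_i)$ for $i\in N$. The hypothesis is stated for exactly this composite map $\mathbf a\mapsto\bar a(\mathbf a,c_N)$, so no extra argument is needed.
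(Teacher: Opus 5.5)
Your proposal is correct and follows the paper's proof essentially step for step. You take a countable union over $(R,c_N,S,T)$ of null analytic zero sets to get an exceptional set that does not depend on the policy; off that set, the tail loss for any optimal policy's recurrent set has a unique minimizer, so the uniqueness clause of Theorem~\ref{thm:eventually-static-suffix-regular} applies, and the only addition over the paper is your Fubini-induction sketch of the standard fact that a nonzero real-analytic function has a null zero set, which the paper simply cites.
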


\begin{proof}[Proof of Corollary~\ref{cor:generic-eventual-stationarity-regular}]
Fix a recurrent set \(R\subseteq[n]\), write \(N=[n]\setminus R\), fix a
frozen count vector \(c_N\in\mathbb N_0^N\), and fix two distinct feasible tail
actions \(S,T\subseteq R\) with \(|S|,|T|\le k\). By analytic nondegeneracy, the
function
\[
    F_{R,c_N,S,T}(a)
    :=
    \bar{\mathcal L}_{R,c_N}(S;a)
    -
    \bar{\mathcal L}_{R,c_N}(T;a)
\]
is real analytic and not identically zero. A nonzero real-analytic function has
a Lebesgue-measure-zero zero set, so
\[
    E_{R,c_N,S,T}
    :=
    \{a\in\mathbb R^n:F_{R,c_N,S,T}(a)=0\}
\]
has Lebesgue measure zero. There are finitely many choices of \(R,S,T\) and
countably many choices of \(c_N\), so the union of all such tie sets is still
Lebesgue measure zero.

Since \(a\) is drawn from an absolutely continuous distribution, with
probability one no limiting tail tie occurs for any tuple \((R,c_N,S,T)\). On
this probability-one event, fix any infinite-horizon optimal policy
\(\pi^\star=(S_t)_{t\ge0}\). Let \(R=R(\pi^\star)\) and let \(c_N\) be its
frozen count vector on \(N=[n]\setminus R\). The finite set of feasible tail
actions \(\{S\subseteq R: |S|\le k\}\) has no pairwise ties in limiting tail
loss, and therefore has a unique minimizer. Theorem~\ref{thm:eventually-static-suffix-regular}
then implies that \(\pi^\star\) is eventually stationary. Because the
probability-one event is independent of the particular optimal policy, the
conclusion holds for every infinite-horizon optimal policy simultaneously.
\end{proof}

\paragraph{Gaussian squared-loss verification of analytic nondegeneracy.}
For completeness, we verify the analytic nondegeneracy condition in
Corollary~\ref{cor:generic-eventual-stationarity-regular} for the Gaussian
squared-loss limiting tail losses under any fixed-sign featurewise learning rule
\(e_j(c_j)=\lambda_j(c_j)(a_j-\hat a_j^{(0)})\), which includes the geometric
rules used in the main examples. Fix \(R\subseteq[n]\), write
\(N=[n]\setminus R\), and fix a frozen count vector \(c\in\mathbb N_0^N\).
For two distinct tail actions \(S,T\subseteq R\) with \(|S|,|T|\le k\),
Proposition~\ref{prop:asymptotic-tail-loss} implies that the difference between
their policy-specific limiting losses is
\[
\begin{aligned}
    \Delta_{R,c,S,T}(a)
    :=
    &\ a_{\bar S}^{\top}\Sigma_{\bar S\mid S}a_{\bar S}
    - a_{\bar T}^{\top}\Sigma_{\bar T\mid T}a_{\bar T} \\
    &+
    z_N(c)^\top
    \left(
        \Sigma_{NS}\Sigma_{SS}^{-1}\Sigma_{SN}
        -
        \Sigma_{NT}\Sigma_{TT}^{-1}\Sigma_{TN}
    \right)
    z_N(c),
\end{aligned}
\]
where
\[
    z_N(c):=
    \bigl(\lambda_j(c_j)(a_j-\hat a_j^{(0)})\bigr)_{j\in N}.
\]
Terms involving an empty selected set are interpreted by the usual block-matrix
convention. The common noise term \(\sigma_\varepsilon^2\) cancels in the
difference. For fixed \(R,c,S,T\), \(\Delta_{R,c,S,T}(a)\) is a polynomial in
\(a\).

This polynomial is not identically zero. If necessary, swap \(S\) and \(T\) so
that there exists \(j\in T\setminus S\). Set \(a_j=u\) and set all other
coordinates of \(a\) to zero. Since \(j\in R\), the frozen-error term
\(z_N(c)\) is independent of \(u\). The first fully learned term contains
\[
    u^2\operatorname{Var}(x_j\mid x_S),
\]
whose coefficient is strictly positive because \(\Sigma\succ0\), whereas the
second fully learned term has no \(u^2\) contribution because \(j\in T\).
Hence \(\Delta_{R,c,S,T}\) is a nonzero polynomial.

The zero set of a nonzero polynomial has Lebesgue measure zero. There are
finitely many choices of \(R,S,T\) and countably many frozen count vectors
\(c\), so the union of all tail-tie sets
\[
    \{\Delta_{R,c,S,T}(a)=0\}
\]
has Lebesgue measure zero.

\subsection{Computational Hardness}
\label{app:hardness-proof}

\begin{proof}[Proof of Theorem~\ref{thm:np_hard}]

We reduce from \textsc{Vertex Cover} on cubic graphs, which is an APX-complete
problem~\citep{hardness_cubic_graph}, then we will show that no FPTAS exists.
Let $G=(V,E)$ be a cubic graph on $n$ vertices, let $M$ be its adjacency
matrix, and let $k$ be the target cover size. Construct the one-shot instance
\[
    a := \mathbf{1}, \qquad
    \hat a := a,\qquad
    \sigma_\varepsilon^2:=1,\qquad
    \rho := \frac{1}{20n}, \qquad
    \Sigma := I+\rho M, \qquad
    B_0 := n-k+\frac{\rho}{2},\qquad
    B:=1+B_0.
\]

The equality \(\hat a=a\) sets the belief-error term to zero.
For this instance, the noisy one-shot objective is
\[
    F(S)=1+H(S),
    \qquad
    H(S):=a_{\bar{S}}^\top\, \Sigma_{\bar{S} \mid S}\, a_{\bar{S}}.
\]
We will show that deciding whether there exists a vertex cover of size $k$ on
$G=(V,E)$ is equivalent to deciding whether there exists an
$S \subseteq [n], |S|\le k$ with $F(S) \leq B.$

We first verify that $\Sigma$ is a valid positive definite correlation matrix.
Since $G$ is cubic, every eigenvalue of $M$ lies in $[-3,3]$. Hence every
eigenvalue of $\Sigma$ is at least $1-3\rho>0$ by Weyl's inequality.
Also, $\Sigma_{ii}=1$ for all $i$ because $M_{ii} = 0$.
Therefore $\Sigma$ is a positive definite correlation matrix.

Next, consider the policy-dependent part $H(S)$.
Now fix any subset $S \subseteq V$, and let $\bar{S}:=V\setminus S$.
By block
expansion,
\[
    \Sigma_{\bar{S}\mid S}
    = I_{\bar{S}}+\rho M_{\bar{S}\bar{S}}
    -\rho^2 M_{\bar{S}S}(I_S+\rho M_{SS})^{-1}M_{S\bar{S}}.
\]
Therefore,
\[
    H(S)
    = \mathbf{1}_{\bar{S}}^\top \Sigma_{\bar{S}\mid S}\,\mathbf{1}_{\bar{S}}
    = |\bar{S}| + 2\rho\, e_G(\bar{S}) - r(S),
\]
where $e_G(\bar{S})$ denotes the number of edges with both endpoints in
$\bar{S}$, and
\[
    r(S) := \rho^2\,
    \mathbf{1}_{\bar{S}}^\top
    M_{\bar{S}S}(I_S+\rho M_{SS})^{-1}M_{S\bar{S}}
    \mathbf{1}_{\bar{S}}.
\]
We next bound $r(S)$.
Because $r(S)$ is in quadratic form, we note that $r(S) \geq 0.$
Since every eigenvalue of $M_{SS}$ lies in $[-3,3]$,
we have
\[
    I_S+\rho M_{SS} \succeq (1-3\rho)I_S,
\]
and hence
\[
    (I_S+\rho M_{SS})^{-1} \preceq \frac{1}{1-3\rho}I_S.
\]
Thus
\[
    r(S) \leq \frac{\rho^2}{1-3\rho}\,\|M_{S\bar{S}}\mathbf{1}_{\bar{S}}\|_2^2.
\]
Because $G$ is cubic, every entry of $M_{S\bar{S}}\mathbf{1}_{\bar{S}}$ is at
most $3$, so
\[
    \|M_{S\bar{S}}\mathbf{1}_{\bar{S}}\|_2^2 \leq 9|S| \leq 9n.
\]
Therefore
\[
    r(S) \leq \frac{9n\rho^2}{1-3\rho} < \frac{\rho}{2}.
\]
We claim that
\[
    \exists\, S\subseteq V,\ |S|\leq k,\ F(S)\leq B
    \quad\Longleftrightarrow\quad
    G \text{ has a vertex cover of size at most } k.
\]
For the forward implication, suppose $G$ has a vertex cover of size at most
$k$. By adding arbitrary vertices if necessary, we may assume it has a vertex
cover $S$ of size exactly $k$. Then $\bar{S}=V\setminus S$ is an independent
set, so $e_G(\bar{S})=0$. Hence
\[
    F(S) = 1+n-k-r(S) \leq 1+n-k < 1+n-k+\frac{\rho}{2} = B.
\]
For the reverse implication, suppose there exists $S\subseteq V$ with
$|S|\leq k$ and $F(S)\leq B$.
If $|S|\leq k-1$, then $|\bar{S}|\geq n-k+1$, so
\[
    F(S) \geq 1+|\bar{S}|-r(S) > 1+n-k+1-\frac{\rho}{2} > 1+n-k+\frac{\rho}{2} = B,
\]
a contradiction. Therefore $|S|=k$.
Now suppose $e_G(\bar{S})\geq 1$. Then
\[
    F(S)
    = 1+|\bar{S}|+2\rho\, e_G(\bar{S})-r(S)
    \geq 1+n-k+2\rho-\frac{\rho}{2}
    = 1+n-k+\frac{3\rho}{2} > B,
\]
again a contradiction. Hence $e_G(\bar{S})=0$, so $\bar{S}$ is an independent
set and $S$ is a vertex cover of size $k$.
This proves the claimed equivalence. Since the reduction is polynomial-time,
the decision version is NP-hard, and therefore the optimization problem is
NP-hard.

Next, note the following two facts:
\begin{itemize}
    \item If $G$ has a vertex cover of size at most $k$, then there exists a
    feasible set $S \subseteq V$ with $|S| \leq k$ such that
    \[
        F(S) \leq 1+n-k.
    \]
    Hence
    \[
        \mathrm{OPT} := \min\{F(S) : S \subseteq [n],\ |S| \leq k\} \leq 1+n-k.
    \]
    \item If $G$ has no vertex cover of size at most $k$, then every feasible
    set $S \subseteq V$ with $|S| \leq k$ satisfies
    \[
        F(S) > 1+n-k+\frac{\rho}{2} = B.
    \]
\end{itemize}
Now choose the approximation parameter
\[
    \varepsilon := \frac{\rho}{4(n+1)}.
\]
Since $1/\varepsilon = 4(n+1)/\rho=80n(n+1)$, this choice is polynomial in the input size,
so the running time of the FPTAS remains polynomial.
Consider first a YES instance. Then $\mathrm{OPT} \leq 1+n-k$, so the FPTAS
returns a feasible set $\widehat{S}$ with
\[
    F(\widehat{S}) \leq (1+\varepsilon)\,\mathrm{OPT}
    \leq (1+\varepsilon)(1+n-k).
\]
Because $1+n-k \leq n+1$, we obtain
\[
    (1+\varepsilon)(1+n-k)
    \leq 1+n-k+\varepsilon(n+1)
    = 1+n-k+\frac{\rho}{4}
    < 1+n-k+\frac{\rho}{2}
    = B.
\]
Hence on YES instances the output value satisfies
\[
    F(\widehat{S}) < B.
\]
Now consider a NO instance. By the gap above, every feasible set $S$ with
$|S| \leq k$ satisfies $F(S) > B$. In particular,
\[
    F(\widehat{S}) > B.
\]
Therefore, by running $\mathcal{A}$ with approximation parameter
$\varepsilon = \rho/(4(n+1))$ and checking whether the returned value is below or
above the threshold $B$, we can decide \textsc{Vertex Cover} on cubic graphs
in polynomial time. This contradicts $P \neq NP$.
Hence no FPTAS exists for the noisy one-shot problem unless $P = NP$.
\end{proof}

\subsection{Finite-Horizon Dynamic Programming}
\label{app:finite-dp-proof}

\begin{proof}[Proof of Proposition~\ref{prop:finite-horizon-dp}]
Because beliefs are determined by the count state $\mathbf m^{(t)}$, the pair
$(t, \mathbf{m}^{(t)})$ is a sufficient state variable. If action $S$ is chosen at
state $\mathbf{m}^{(t)}$, then the current loss is $\mathcal L(S,\hat a(\mathbf{m}^{(t)}))$ and the next state is
deterministically $\mathbf{m}^{(t)} + \mathbf{1}_S$, where $\mathbf{1}_{S}$ is 1 for all $i \in S$ and 0 otherwise.
We prove the forward recursion by induction on \(t\). At \(t=0\),
\(V_0(\mathbf 0)=0\) and all other states have value \(+\infty\), so
\(V_0\) is exactly the minimum loss among length-zero prefixes. Suppose the
claim holds at time \(t\). Any length-\(t+1\) action sequence ending at
\(\mathbf m'\) consists of a length-\(t\) prefix ending at some state
\(\mathbf m\), followed by an action \(S\) satisfying
\(\mathbf m'=\mathbf m+\mathbf 1_S\). Its loss is the prefix loss plus
\(\delta^t\mathcal L(S,\hat a(\mathbf m))\). Minimizing over all such predecessors gives
exactly the update rule in Algorithm~\ref{algo:dp}. Conversely, every update in
the algorithm corresponds to a valid prefix followed by one valid action.
Thus \(V_t(\mathbf m)\) equals the best prefix loss for every layer \(t\), and
\(\min_{\mathbf m}V_T(\mathbf m)\) is the optimal finite-horizon value from the
initial state. The stored parent pointers record a predecessor attaining each
minimum, so backtracking from any minimizing terminal state recovers an optimal
action sequence.

There are at most $(T+1)^n = O(T^n)$ possible count states, and each state
considers \(\sum_{j=0}^k\binom{n}{j}\) actions. Filling the layered table forward therefore
takes \(O(T^n\sum_{j=0}^k\binom{n}{j})\) time and $O(T^n)$ memory.
\end{proof}

\subsection{Truncation Approximation}
\label{app:truncation-proof}

\begin{proof}[Proof of Proposition~\ref{prop:cutoff-additive}]
For every policy $\pi$,
\[
    0
    \le
    J(\delta,\pi) - J_{\bar{T}}(\delta,\pi)
    =
    \sum_{t=\bar{T}}^\infty \delta^t \mathcal L(S_t,\hat a(\mathbf m^{(t)}))
    \le
    \sum_{t=\bar{T}}^\infty \delta^t \bar{L}
    =
    \frac{\bar{L}\,\delta^{\bar{T}}}{1-\delta}.
\]
Therefore
\[
\begin{aligned}
    J(\delta,\pi^{(\bar{T})})
    &\le J_{\bar{T}}(\delta,\pi^{(\bar{T})}) + \frac{\bar{L}\,\delta^{\bar{T}}}{1-\delta}\\
    &\le J_{\bar{T}}(\delta,\pi^\star) + \frac{\bar{L}\,\delta^{\bar{T}}}{1-\delta}\\
    &\le J(\delta,\pi^\star) + \frac{\bar{L}\,\delta^{\bar{T}}}{1-\delta},
\end{aligned}
\]
where the middle inequality uses the optimality of the first $\bar{T}$ actions of
$\pi^{(\bar{T})}$ for the truncated problem.
\end{proof}

\begin{remark}[Tightness of the suffix-agnostic tail bound]
\label{rem:cutoff-tail-tightness}
The dependence $\bar L\delta^{\bar T}/(1-\delta)$ in Proposition~\ref{prop:cutoff-additive} cannot be
improved for a guarantee that allows an arbitrary completion after the
finite-horizon prefix. Consider the Gaussian squared-loss instance with
$n=2$, $k=1$, $\Sigma=I$, zero noise, $a=(\sqrt{\bar L},0)$, and
$\hat a^{(0)}=a$. Then selecting $S=\{1\}$ incurs loss $0$ in every round,
whereas selecting $S=\{2\}$ incurs loss $\bar L$ in every round. For any
truncation length $\bar T$, the finite-horizon optimum selects $\{1\}$ in all
first $\bar T$ rounds. However, an admissible arbitrary completion may select
$\{2\}$ forever afterwards. Relative to the infinite-horizon optimal policy
that always selects $\{1\}$, this completion has excess loss
\[
\sum_{t=\bar T}^{\infty}\delta^t \bar L
=
\frac{\bar L\delta^{\bar T}}{1-\delta}.
\]
Thus Proposition~\ref{prop:cutoff-additive} is tight as a suffix-agnostic statement. This example
does not rule out sharper instance-dependent guarantees for a specified
stationary-completion rule.
\end{remark}

%% file: sections_arxiv/arxiv_experiments.tex
\section{Model-Variant Experiments}
\label{sec:model-variant-experiments}

\newcommand{\arxivmissingfigure}[1]{%
    \fbox{\parbox[c][1.1in][c]{0.92\linewidth}{\centering\scriptsize Run \texttt{\detokenize{#1}} to generate this figure.}}%
}

\label{sec:variant-x-loss-phi}
The main experiments use Gaussian covariates, squared loss, and geometric learning curves.
We add a four-scenario variant suite that changes one modeling component at a time while keeping \(n=2\), \(k=1\), observation-noise variance \(10^{-3}\), planning discount \(\delta=0.99\), and three random draws of \(a,\hat a^{(0)}\sim\mathrm{Uniform}([0,1]^2)\).
The curve labels in Figure~\ref{fig:variant-x-loss-phi} refer to the following settings:
\begin{enumerate}
    \item \textbf{Baseline}: \(x\sim\mathcal N(0,\Sigma_\rho)\), squared loss, and \(\phi_i(m)=1.05^{-2m}\).
    \item \textbf{Beta distribution}: \(x\) has standardized Beta\((2,5)\) marginals, squared loss, and \(\phi_i(m)=1.05^{-2m}\).
    \item \textbf{Huber loss}: \(x\sim\mathcal N(0,\Sigma_\rho)\), Huber loss with threshold \(1\), and \(\phi_i(m)=1.05^{-2m}\).
    \item \textbf{Power-law \(\phi\)}: \(x\sim\mathcal N(0,\Sigma_\rho)\), squared loss, and \(\phi_i(m)=(1/(m+1))^{0.75}\).
\end{enumerate}
Here \(\Sigma_\rho=(1-\rho)I+\rho\mathbf 1\mathbf 1^\top\).
For all four scenarios, the one-step expected loss is evaluated by the same fixed empirical oracle.
For each scenario, correlation setting, and random repeat, we draw \(N_{\mathrm{MC}}=1024\) samples \(\{(x^{(r)},\varepsilon^{(r)})\}_{r=1}^{N_{\mathrm{MC}}}\) and reuse them for every DP state-action loss:
\[
    \widehat L(m,j)
    =\frac{1}{N_{\mathrm{MC}}}\sum_{r=1}^{N_{\mathrm{MC}}}
    \ell\!\left(\widehat y_{m,j}^{(r)}-y^{(r)}\right),
    \qquad
    y^{(r)}=a^\top x^{(r)}+\varepsilon^{(r)} .
\]
Here \(m\) is the current vector of observation counts, \(j\) is the coordinate selected by the action, and \(\widehat y_{m,j}^{(r)}\) is the prediction after observing coordinate \(j\) and imputing the hidden coordinate.
Thus \(N_{\mathrm{MC}}=1024\) is only the Monte Carlo resolution used to approximate the expected loss; it is not a distributional parameter.
We use this empirical oracle even for the Gaussian squared-loss baseline so that the four curves differ only in the model variant, not in the loss-evaluation method.

The Beta-distribution scenario requires one extra specification: how to draw \(x\) with Beta\((2,5)\) marginals while still using \(\rho\) to control dependence between the two coordinates.
For each sample \(r\), draw
\[
    z^{(r)}\sim \mathcal N(0,\Sigma_\rho),
    \qquad
    u_i^{(r)}=\Phi\!\left(z_i^{(r)}\right),
    \qquad
    \tilde x_i^{(r)}=F^{-1}_{\mathrm{Beta}(2,5)}\!\left(u_i^{(r)}\right),
\]
where \(\Phi\) is the standard normal CDF and \(F^{-1}_{\mathrm{Beta}(2,5)}\) is the quantile function of a Beta\((2,5)\) random variable.
We then standardize each coordinate in the Monte Carlo sample,
\[
    \bar x_i=\frac{1}{N_{\mathrm{MC}}}\sum_{s=1}^{N_{\mathrm{MC}}}\tilde x_i^{(s)},
    \qquad
    s_i^2=\frac{1}{N_{\mathrm{MC}}}\sum_{s=1}^{N_{\mathrm{MC}}}\left(\tilde x_i^{(s)}-\bar x_i\right)^2,
    \qquad
    x_i^{(r)}=\frac{\tilde x_i^{(r)}-\bar x_i}{s_i}.
\]
The Gaussian draw \(z^{(r)}\) is used only to induce dependence between the two non-Gaussian coordinates; in this scenario, the plotted \(\rho\) is the latent-Gaussian correlation parameter before the Beta marginal transform.
In the Huber-loss scenario, the squared residual loss is replaced by
\[
    \ell_\tau(r)=
    \begin{cases}
        \frac{1}{2}r^2, & |r|\le \tau,\\
        \tau\bigl(|r|-\frac{1}{2}\tau\bigr), & |r|>\tau,
    \end{cases}
\]
with residual \(r=\hat y-y\) and threshold \(\tau=1\).
The conditional imputation in \(\widehat y_{m,j}^{(r)}\) is estimated by least squares: linear imputation for the Gaussian scenarios and cubic imputation for the Beta-distribution scenario.
Therefore Figure~\ref{fig:variant-x-loss-phi} does not rely on the Gaussian squared-loss closed form.

Figure~\ref{fig:variant-x-loss-phi} plots the resulting policies under these four scenarios.
The top panel reproduces the empirical part of Figure~\ref{fig:static-suffix-thresholds} by plotting the recovered dynamic phase length \(T_d\) against correlation; the bottom panel reproduces the Figure~\ref{fig:truncated-dp-runtime-quality} protocol by evaluating the truncated-DP policy obtained at horizon \(\bar T\), appending its recovered stationary suffix, and reporting retained optimality and runtime.
The trends are consistent with the main experiments: the dynamic phase remains short under weak or moderate dependence and grows primarily when the two covariates are strongly dependent, while truncated DP retains nearly all of the reference value once \(\bar T\) is large enough.
This supports the qualitative robustness of our conclusions to the covariate distribution, the loss function, and the learning-curve shape in the tested variants.
The theorem certificate from Figure~\ref{fig:static-suffix-thresholds} is not plotted for the non-Gaussian or non-squared-loss variants because the current certificate is specialized to the analytic Gaussian squared-loss oracle.

\begin{figure}[t]
    \centering
    \IfFileExists{sections_arxiv/figures/variant_static_suffix_thresholds.pdf}{%
        \includegraphics[width=\linewidth]{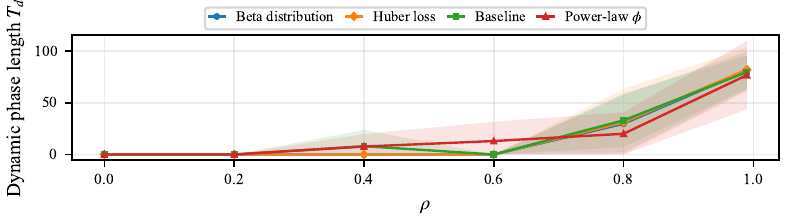}%
    }{%
        \arxivmissingfigure{bash/figures/run_variant_figures.sh}%
    }
    \vspace{0.35em}
    \IfFileExists{sections_arxiv/figures/variant_truncated_dp_runtime_quality.pdf}{%
        \includegraphics[width=\linewidth]{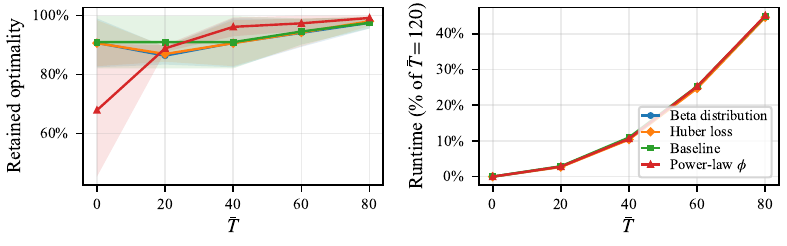}%
    }{%
        \arxivmissingfigure{bash/figures/run_variant_figures.sh}%
    }
    \caption{Four-scenario model-variant suite that changes one modeling component at a time while keeping \(n=2,k=1\), \(\delta=0.99\), and three random draws of \((a,\hat a^{(0)})\). The scenarios vary the feature distribution, loss function, or human learning curve relative to the Gaussian squared-loss geometric-learning baseline. Top: empirical dynamic phase length \(T_d\) as the correlation parameter \(\rho\) varies. Bottom: retained optimality over evaluation horizon \(T=180\) and runtime normalized by the \(\bar T=120\) solve as the truncation horizon \(\bar T\) varies at \(\rho=0.8\). Bands show mean \(\pm 1.96\) standard errors. Across variants, \(T_d\) grows mainly under strong correlation, and the largest plotted \(\bar T\) retains at least \(97\%\) of the reference policy value.}
    \label{fig:variant-x-loss-phi}
\end{figure}